\documentclass[letterpaper, 10 pt]{IEEEtran}
\pagestyle{empty}
\usepackage{textcomp}
\usepackage{color}
\usepackage{float}
\usepackage{multirow}
\usepackage{amsmath}
\usepackage{amssymb,eqnarray,amsthm}
\usepackage{mathabx,mathtools}
\usepackage{graphicx}
\usepackage{stmaryrd}
\usepackage{psfrag}
\usepackage{pstool}

\usepackage{booktabs} 
\usepackage{multirow}   
\usepackage{array, makecell} %

\DeclareMathOperator*{\essinf}{ess\,inf}

 \DeclareMathOperator*{\argmin}{argmin}

\pagenumbering{gobble}

\makeatletter
\newcommand{\pushright}[1]{\ifmeasuring@#1\else\omit\hfill$\equationstyle#1$\fi\ignorespaces}
\newcommand{\pushleft}[1]{\ifmeasuring@#1\else\omit$\equationstyle#1$\hfill\fi\ignorespaces}
\makeatother

\DeclareMathOperator*{\esssup}{ess\,sup}

\usepackage{multicol}
\usepackage{bbm}
\usepackage[noend]{algorithmic}
\usepackage{algorithm2e}
\usepackage[english]{babel}
\usepackage{subfigure}
\usepackage{cite}
\usepackage{mathtools}
\mathtoolsset{showonlyrefs=false}
\makeatletter


\floatstyle{ruled}
\newfloat{algorithm}{tbp}{loa}
\providecommand{\algorithmname}{Algorithm}
\floatname{algorithm}{\protect\algorithmname}

\newtheorem{theorem}{\protect\theoremname}
\newtheorem{defn}{\protect\definitionname}
\newtheorem{proposition}{\protect\propositionname}

\newtheorem{assum}{\protect\assumname}
\newtheorem{problem}{\protect\probname}
\newtheorem{corollary}{\protect\corname}

\providecommand{\definitionname}{\textbf{Definition}}
\providecommand{\propositionname}{\textbf{Proposition}}
\providecommand{\remarkname}{\textbf{Remark}}
\providecommand{\theoremname}{\textbf{Theorem}}
\providecommand{\lemmaname}{Lemma}
\providecommand{\assumname}{\textbf{Assumption}}
\providecommand{\probname}{\textbf{Problem}}
\providecommand{\corname}{\textbf{Corollary}}




\begin{document}

\title{Risk-Averse Decision Making Under Uncertainty}
\author{Mohamadreza Ahmadi, Ugo Rosolia, 
Michel D. Ingham, Richard M. Murray, and Aaron D. Ames \thanks{M. Ahmadi, U. Rosolia, R. Murray, and A. Ames are with Control and Dynamical Systems (CDS) at the California Institute of Technology, 1200 E. California Blvd., MC 104-44, Pasadena, CA 91125,  e-mail: (\{mrahmadi,urosolia,murray,ames\}@caltech.edu). M. Ingham is with NASA Jet Propulsion Laboratory, 4800 Oak Grove Dr, Pasadena, CA 91109, e-mail: (michel.d.ingham@jpl.nasa.gov).}}

\maketitle

\begin{abstract}
A large class of decision making under uncertainty problems can be described via Markov decision processes (MDPs) or partially observable MDPs (POMDPs), with application to artificial intelligence and operations research, among others. Traditionally, policy synthesis techniques are proposed such that a total expected cost/reward is minimized/maximized. However, optimality in the total expected cost sense is only reasonable if system’s  behavior in the large number
of runs is of interest, which has limited the use of such policies in practical mission-critical scenarios, wherein large deviations from the expected behavior may lead to mission failure. In this paper, we consider the problem of designing policies for MDPs and POMDPs with objectives and constraints in terms of dynamic coherent risk measures, which we refer to as the \emph{constrained risk-averse problem}. Our contributions are fourfold:

\begin{itemize}
    \item [(i)] For MDPs, we reformulate the problem into a inf-sup problem via the Lagrangian framework. Under the assumption that the risk objectives and constraints can be represented by a Markov risk transition mapping, we propose an optimization-based method to synthesize Markovian policies;
    \item [(ii)] For MDPs, we demonstrate that the formulated optimization problems are in the form of difference convex programs (DCPs) and can be solved by the disciplined convex-concave programming (DCCP) framework. We show that these results generalize linear programs for constrained MDPs with total discounted expected costs and constraints;
    \item [(iii)] For POMDPs, we show that, if the coherent risk measures can be defined as a Markov risk transition mapping, an infinite-dimensional optimization can be used to design Markovian belief-based policies;
    \item [(iv)] For POMDPs with stochastic finite-state controllers (FSCs), we show that the latter optimization simplifies to a (finite-dimensional) DCP and can be solved by the DCCP framework. We incorporate these DCPs in a policy iteration algorithm to design risk-averse FSCs for POMDPs.
\end{itemize}
We demonstrate the efficacy of the proposed method with numerical experiments involving conditional-value-at-risk (CVaR) and entropic-value-at-risk (EVaR)  risk measures.
\end{abstract}

\begin{IEEEkeywords}
 Markov Processes, Stochastic systems, Uncertain systems.
 
\end{IEEEkeywords}

\section{Introduction}

Autonomous systems are being increasingly deployed in real-world settings. Hence, the associated risk that stems from unknown and unforeseen circumstances is correspondingly on the rise. This demands for autonomous systems that can make appropriately conservative decisions when faced with uncertainty in their environment and behavior. Mathematically speaking, risk can be quantified in numerous ways, such as chance constraints~\cite{wang2020non} and distributional robustness~\cite{NIPS2010_19f3cd30}. However, applications in autonomy and robotics require more ``nuanced assessments of risk''~\cite{majumdar2020should}. Artzner \textit{et. al.}~\cite{artzner1999coherent} characterized a set of natural properties that are desirable for a risk measure, called a coherent risk measure, and  have  obtained widespread
acceptance in finance and operations research, among other fields.

A popular model for representing sequential decision making under uncertainty is a Markov decision processes (MDP)~\cite{Puterman94}. MDPs with coherent risk objectives were studied in~\cite{tamar2016sequential,tamar2015policy}, where the authors proposed a sampling-based algorithm for finding saddle point solutions using policy gradient methods. However, \cite{tamar2016sequential} requires the risk envelope appearing in the dual representation of the coherent risk measure to be known with an explicit canonical convex programming formulation. While this may be the case for CVaR, mean-semi-deviation, and spectral risk measures~\cite{shapiro2014lectures}, such explicit form is not known for  general coherent risk measures, such as EVaR. Furthermore, it is not clear whether the saddle point solutions are a lower bound or upper bound to the optimal value. Also, policy-gradient based methods require calculating the gradient of  the coherent risk measure, which is not available in explicit form in general. For the CVaR measure, MDPs with risk constraints and total expected costs were studied in~\cite{prashanth2014policy,chow2014algorithms} and locally optimal solutions were found via policy gradients, as well. However, this method also leads to saddle point solutions (which cannot be shown to be upper bounds or lower bounds of the optimal value) and cannot be applied to general coherent risk measures. In addition, because the objective and the constraints are in terms of different coherent risk measures, the authors assume there exists a policy that satisfies the CVaR constraint (feasibility assumption), which may not be the case in general. Following the footsteps of~\cite{pflug2016time}, a promising approach based on approximate value iteration was proposed for MDPs with CVaR objectives in~\cite{chow2015risk}. A policy iteration algorithm for finding policies that minimize total coherent risk measures for MDPs was studied in~\cite{ruszczynski2010risk} and a computational non-smooth Newton method was proposed in~\cite{ruszczynski2010risk}.


When the states of the agent and/or the environment are not directly observable, a partially observable MDP (POMDP) can be used to study decision making under uncertainty introduced by the partial state observability~\cite{krishnamurthy2016partially,ahmadi2020control}. POMDPs with coherent risk measure objectives were studied in~\cite{fan2018process,fan2018risk}. Despite the elegance of the theory, no computational method was proposed to design policies for general coherent risk measures. In \cite{ahmadi2020risk}, we proposed a method for finding finite-state controllers for POMDPs with objectives defined in terms of coherent risk measures, which takes advantage of convex optimization techniques. However, the method can only be used if the risk transition mapping is affine in the policy. 

\textit{Summary of Contributions:} In this paper, we consider MDPs and POMDPs with both objectives and constraints in terms of coherent risk measures. Our contributions are fourfold:
\begin{itemize}
    \item [(i)] For MDPs, we use the Lagrangian framework and reformulate the problem into a inf-sup problem. For Markov risk transition mappings, we propose an optimization-based method to design Markovian policies that lower-bound the constrained risk-averse problem;
    \item [(ii)] For MDPs, we evince that the optimization problems are in the special form of DCPs and can be solved by the DCCP method. We also demonstrate that these results generalize linear programs for constrained MDPs with total discounted expected costs and constraints;
    \item [(iii)] For POMDPs, we demonstrate that, if the coherent risk measures can be defined as a Markov risk transition mapping, an infinite-dimensional optimization can be used to design Markovian belief-based policies, which in theory requires infinite memory to synthesize (in accordance with classical POMDP complexity results);
    \item [(iv)] For POMDPs with stochastic finite-state controllers (FSCs), we show that the latter optimization converts to a (finite-dimensional) DCP and can be solved by the DCCP framework. We incorporate these DCPs in a policy iteration algorithm to design risk-averse FSCs for POMDPs.
\end{itemize}
We assess the efficacy of the proposed method with numerical experiments involving conditional-value-at-risk (CVaR) and entropic-value-at-risk (EVaR)  risk measures.

Preliminary results on risk-averse MDPs were presented in~\cite{ahmadi2021aaai}. This paper, in addition to providing detailed proofs and new numerical analysis in the MDP case, generalizes~\cite{ahmadi2021aaai} to partially observable systems (POMDPs) with dynamic coherent risk objectives and constraints.

{The rest of the paper is organized as follows. In the next section, we briefly review some notions used in the paper. In Section~III, we formulate the problem under study. In Section IV, we present the optimization-based method for designing risk-averse policies for MDPs. In Section V, we describe a policy iteration method for designing finite-memory controllers for risk-averse POMDPs. In Section VI, we illustrate the proposed methodology via numerical experiments. Finally, in Section~VII, we conclude the paper and give directions for future research.}

\textbf{Notation: } We denote by $\mathbb{R}^n$ the $n$-dimensional Euclidean space and $\mathbb{N}_{\ge0}$ the set of non-negative integers. Throughout the paper, we use bold font to denote a vector and $(\cdot)^\top$ for its transpose, \textit{e.g.,} $\boldsymbol{a}=(a_1,\ldots,a_n)^\top$, with $n\in \{1,2,\ldots\}$. For a vector $\boldsymbol{a}$, we use $\boldsymbol{a}\succeq (\preceq) \boldsymbol{0}$ to denote element-wise non-negativity (non-positivity) and $\boldsymbol{a}\equiv \boldsymbol{0}$ to show all elements of $\boldsymbol{a}$ are zero. For two vectors $a,b \in \mathbb{R}^n$, we denote their inner product by $\langle \boldsymbol{a}, \boldsymbol{b} \rangle$, \textit{i.e.,} $\langle \boldsymbol{a}, \boldsymbol{b} \rangle=\boldsymbol{a}^\top \boldsymbol{b}$. For a finite set $\mathcal{A}$, we denote its power set by $2^\mathcal{A}$, \textit{i.e.,} the set of all subsets of $\mathcal{A}$. For  a probability space $(\Omega, \mathcal{F}, \mathbb{P})$ and a constant $p \in [1,\infty)$, $\mathcal{L}_p(\Omega, \mathcal{F}, \mathbb{P})$ denotes the vector space of real valued random variables $c$ for which $\mathbb{E}|c|^p < \infty$.

\section{Preliminaries}

In this section, we briefly review some notions and definitions used throughout the paper.

\subsection{Markov Decision Processes} \label{sec:POMDP}

An \emph{MDP} is a tuple $\mathcal{M}=(\mathcal{S},Act, T, \kappa_0)$ consisting of a set of states $\mathcal{S}  =
\{s_{1} ,\dots,s_{|\mathcal{S}|} \}$ of the
autonomous agent(s) and world model, actions $Act = \{\alpha_{1},\dots,\alpha_{|Act|}\}$ available to the agent,
a transition function $T(s_{j} |s_{i} ,\alpha)$, and $\kappa_0$ describing the initial distribution over the states. 

This paper considers {\em finite} Markov decision processes, where $\mathcal{S} $ and $Act$
are finite sets. For each action the probability of making a transition from state $s_{i}  \in
\mathcal{S} $ to state $s_{j}  \in \mathcal{S} $ under action $\alpha \in Act$ is given by
$T(s_{j} |s_{i} ,\alpha)$. The probabilistic components of a
MDP must satisfy the following:
\begin{equation*}
    \begin{cases}
    \sum_{s  \in \mathcal{S} } T(s |s_{i} ,\alpha) = 1, & \forall s_i  \in \mathcal{S} ,\forall\alpha \in Act, \\
    \sum_{s  \in \mathcal{S} } \kappa_0(s ) = 1. & {}
    \end{cases}
\end{equation*}

\subsection{Partially Observable MDPs} \label{sec:POMDP}

A \emph{POMDP} is a tuple $\mathcal{PM}=(\mathcal{M}, \mathcal{O}, O)$ consisting of an MDP $\mathcal{M}$, observations $\mathcal{O} = \{o_{1},\dots,o_{|\mathcal{O}|}\}$,
and an observation model $O(o\mid s)$. We consider {\em finite} POMDPs, where $\mathcal{O}$ 
is a finite set. Then, for each state $s_{i} $, an observation $o \in
\mathcal{O}$ is generated independently with probability $O(o|s_{i} )$, which satisfies
\begin{equation*}
    \sum_{s  \in \mathcal{S} } O(o |s ) = 1,  \quad \forall s  \in \mathcal{S}.
\end{equation*}

In POMDPs, the states $s\in \mathcal{S}$ are not directly observable.  The beliefs $b \in \Delta(\mathcal{S})$, i.e., the probability of being in different states, with $\Delta(\mathcal{S})$ being the set of probability distributions over $\mathcal{S}$, for all $s \in \mathcal{S}$ can be computed using the Bayes' law as follows:
\begin{align}
    b_0(s) &= \frac{\kappa_0(s)O(o_0\mid s)}{\sum_{o \in O} \kappa_0(s) O(o \mid s)},\\ \label{eq:beliefupdate}
    b_t(s) &= \frac{O(o_t \mid s)\sum_{s' \in \mathcal{S}} T(s \mid s',\alpha_t)b_{t-1}(s')}{\sum_{s \in \mathcal{S}} O(o_t \mid s)\sum_{s' \in \mathcal{S}} T(s \mid s',\alpha_t)b_{t-1}(s')},
\end{align}
for all $t\ge 1$.

\subsection{Finite State Control of POMDPs} \label{sec:FSC}


It is well established that designing optimal policies for POMDPs based on the (continuous) belief states require uncountably infinite memory or
internal states \cite{CassandraKL94,MADANI20035}. This paper focuses on a particular class of POMDP controllers, namely, FSCs.

 A \emph{stochastic finite state controller } for
$\mathcal{PM}$ is given by the tuple $\mathcal{G} = (G,\omega,\kappa)$, where $G = \{g_1,g_2,\dots,g_{|G|}\}$ is a finite set of internal states~(I-states), $\omega:G \times \mathcal{O} \to \Delta({G \times Act})$ is a function of internal stochastic finite state controller states  $g_k$ and observation $o$, such that $\omega(g_k,o)$ is a probability distribution over $G \times
Act$.
The next internal state and action pair $(g_l,\alpha)$ is chosen by independent sampling of 
$\omega(g_k,o)$. By abuse of notation, $\omega(g_l,\alpha|g_k,o)$ will denote the probability of
transitioning to internal stochastic finite state controller state $g_l$ and taking action $\alpha$, when the current internal
state is $g_k$ and observation $o$ is received. $\kappa:\Delta({\mathcal{S}}) \to \Delta(G)$ chooses the starting internal FSC state $g_0$, by independent
sampling of $\kappa(\kappa_0)$, given initial distribution $\kappa_0$ of $\mathcal{PM}$, and
$\kappa(g|\kappa_0)$ will denote the probability of starting the FSC in internal state $g$ when
the initial POMDP distribution is $\kappa_0$.

\subsection{Coherent Risk Measures}

Consider a probability space $(\Omega, \mathcal{F}, \mathbb{P})$, a filteration $\mathcal{F}_0 \subset \cdots \mathcal{F}_N \subset \mathcal{F} $, and an adapted sequence of random variables~(stage-wise costs) $c_t,~t=0,\ldots, N$, where $N \in \mathbb{N}_{\ge 0} \cup \{\infty\}$.
For $t=0,\ldots,N$, we further define the spaces $\mathcal{C}_t = \mathcal{L}_p(\Omega, \mathcal{F}_t, \mathbb{P})$, $p \in [1,\infty)$,  $\mathcal{C}_{t:N}=\mathcal{C}_t\times \cdots \times \mathcal{C}_N$ and $\mathcal{C}=\mathcal{C}_0\times \mathcal{C}_1 \times \cdots$. We  assume that the sequence $\boldsymbol{c} \in \mathcal{C}$ is almost surely bounded (with exceptions having probability zero), \textit{i.e.}, 
 $
\max_t \esssup~| c_t(\omega) | < \infty.
$

In order to describe how one can evaluate the risk of sub-sequence $c_t,\ldots, c_N$ from the perspective of stage $t$, we require the following definitions.

\vspace{0.2cm}
\begin{defn}[Conditional Risk Measure]{
A mapping $\rho_{t:N}: \mathcal{C}_{t:N} \to \mathcal{C}_{t}$, where $0\le t\le N$, is called a \emph{conditional risk measure}, if it has the following monoticity property:
\begin{equation*}
    \rho_{t:N}(\boldsymbol{c}) \le   \rho_{t:N}(\boldsymbol{c}'), \quad \forall \boldsymbol{c}, \forall \boldsymbol{c}' \in \mathcal{C}_{t:N}~\text{such that}~\boldsymbol{c} \preceq \boldsymbol{c}'.
\end{equation*}
}
\end{defn}
\vspace{0.2cm}
\begin{defn}[Dynamic Risk Measure]
{A \emph{dynamic risk measure} is a sequence of conditional risk measures $\rho_{t:N}:\mathcal{C}_{t:N}\to \mathcal{C}_{t}$, $t=0,\ldots,N$.}
\end{defn}
\vspace{0.2cm}
One fundamental property of dynamic risk measures is their consistency over time~\cite[Definition 3]{ruszczynski2010risk}. That is, if $c$ will be as good as $c'$ from the perspective of some future time $\theta$, and they are identical between time $\tau$ and $\theta$, then $c$ should not be worse than $c'$ from the perspective at time $\tau$.

In this paper, we focus on time consistent, coherent risk measures, which satisfy four nice mathematical properties, as defined below~\cite[p. 298]{shapiro2014lectures}. 

\vspace{0.2cm}
\begin{defn}[Coherent Risk Measure]\label{defi:coherent}{
We call the one-step conditional risk measures $\rho_t: \mathcal{C}_{t+1}\to \mathcal{C}_t$, $t=1,\ldots,N-1$ a \emph{coherent risk measure} if it satisfies the following conditions
\begin{itemize}
    \item \textbf{Convexity:} $\rho_t(\lambda c + (1-\lambda)c') \le \lambda \rho_t(c)+(1-\lambda)\rho_t(c')$, for all $\lambda \in (0,1)$ and all $c,c' \in \mathcal{C}_{t+1}$;
    \item \textbf{Monotonicity:} If $c\le c'$ then $\rho_t(c) \le \rho_t(c')$ for all $c,c' \in \mathcal{C}_{t+1}$;
    \item \textbf{Translational Invariance:} $\rho_t(c+c')=c+\rho_t(c')$ for all $c \in \mathcal{C}_t$ and $c' \in \mathcal{C}_{t+1}$;
    \item \textbf{Positive Homogeneity:} $\rho_t(\beta c)= \beta \rho_t(c)$ for all $c \in \mathcal{C}_{t+1}$ and $\beta \ge 0$.
\end{itemize}
}
\end{defn}
\vspace{0.2cm}

We are interested in the discounted infinite horizon problems. Let $\gamma \in (0,1)$ be a given discount factor. For $t=0,1,\ldots$, we define the functional
\begin{multline}
    \rho^\gamma_{0,t}(c_0,\ldots,c_t) = \rho_{0,t}(c_0,\gamma c_1,\ldots, \gamma^{t}c_t) \nonumber \\
                                      = \rho_0 \bigg(c_0 + \rho_{1} \big( \gamma c_{1}+\rho_{2}(\gamma^2c_{2}+\cdots \nonumber\\
                                      ~~+\rho_{t-1}\left(\gamma^{t-1}c_{t-1}+\rho_{t}(\gamma^{t}c_t) \right) \cdots )\big) \bigg).
\end{multline}
 Finally, we have total discounted risk functional $\rho^{\gamma}:\mathcal{C}\to \mathbb{R}$ defined as \begin{equation}\label{eq:totaldiscrisk} \rho^{\gamma}(\boldsymbol{c}) = \lim_{t \to \infty} \rho^\gamma_{0,t}(c_0,\ldots,c_t).\end{equation} From~\cite[Theorem 3]{ruszczynski2010risk}, we have that $\rho^{\gamma}$ is convex, monotone, and positive homogeneous.

\subsection{Examples of Coherent Risk Measures} \label{sec:riskexamples}

Next, we briefly review three examples of coherent risk measures that will be used in this paper.

\textbf{Total Conditional Expectation:} The simplest risk measure is the total conditional expectation given by
\begin{equation}
    \rho_t(c_{t+1}) =  \mathbb{E}\left[  c_{t+1} \mid \mathcal{F}_{t}    \right].
\end{equation} 
It is easy to see that total conditional expectation satisfies the properties of a coherent risk measure as outlined in Definition~\ref{defi:coherent}. Unfortunately, total conditional expectation is agnostic to realization fluctuations of the random variable $c$ and is only concerned with the mean value of $c$ at large number of realizations. Thus, it is a risk-neutral measure of performance.

\textbf{Conditional Value-at-Risk:} Let $c \in \mathcal{C}$ be a random variable. For a given confidence level $\varepsilon \in (0,1)$, value-at-risk ($\mathrm{VaR}_{\varepsilon}$) denotes the $(1-\varepsilon)$-quantile value of the random  variable $c \in \mathcal{C}$. Unfortunately, working with VaR  for non-normal random variables is numerically unstable and optimizing models involving  VaR is intractable in
high dimensions~\cite{rockafellar2000optimization}. 

In contrast, CVaR overcomes the shortcomings of VaR. CVaR with confidence level $\varepsilon \in (0,1)$ denoted $\mathrm{CVaR}_{\varepsilon}$ measures the expected loss in the $(1-\varepsilon)$-tail given that the particular threshold $\mathrm{VaR}_{\varepsilon}$ has been crossed, i.e., $\mathrm{CVaR}_{\varepsilon} (c) =  \mathbb{E}\left[ c \mid c \ge \mathrm{VaR}_{\varepsilon}(c)  \right]$. An optimization formulation for CVaR was proposed in~\cite{rockafellar2000optimization}. That is, $\mathrm{CVaR}_{\varepsilon}$ is given by 
\begin{multline}
   \rho_t(c_{t+1}) =  \mathrm{CVaR}_{\varepsilon}(c_{t+1}) \\:=\inf_{\zeta \in \mathbb{R}} \left(\zeta+\frac{1}{\varepsilon}\mathbb{E}\left[ (c_{t+1}-\zeta)_{+} \mid \mathcal{F}_t\right] \right), \label{eq:cvardual}
\end{multline}
 where $(\cdot)_{+}=\max\{\cdot, 0\}$. A value of $\varepsilon \to 1$ corresponds to a risk-neutral case, i.e.,  $\mathrm{CVaR_1}(c)=\mathbb{E}(c)$; whereas, a value of $\varepsilon \to 0$ is rather a risk-averse case, i.e., $\mathrm{CVaR_0}(c)=\mathrm{VaR}_0(c)= \essinf(c)$~\cite{rockafellar2002conditional}. Figure~\ref{fig:varvscvar} illustrates these notions for an example $c$ variable with distribution $p(c)$.
 
 \begin{figure} 
\centering{
\includegraphics[width=8.5cm]{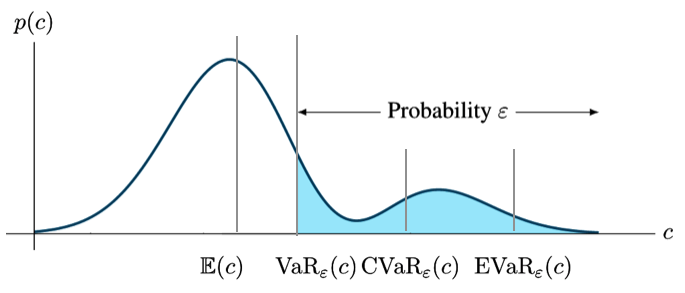}
}
\caption{Comparison of the mean, VaR, and CVaR for a given confidence $\varepsilon \in (0,1)$. The axes denote the values of the stochastic variable $c$ and its probability density function $p(c)$. The shaded area denotes the $\%\varepsilon$ of the area under $p(c)$. The expected cost $\mathbb{E}(c)$ is much smaller than the worst case cost.  VaR gives the value of $c$ at the $(1-\varepsilon)$-tail of the distribution. But, it ignores the values of $c$ with probability below $\varepsilon$.  CVaR is the average of the values of VaR with probability less than $\varepsilon$ (average of the worst-case values of $c$ in the $(1-\varepsilon)$ tail of the distribution).} \label{fig:varvscvar}
\end{figure}

\textbf{Entropic Value-at-Risk:} Unfortunately, CVaR ignores the losses below the VaR threshold. EVaR is the tightest upper bound in the sense of Chernoff inequality for VaR and CVaR and its dual representation is associated with the relative entropy. In fact, it was shown in~\cite{ahmadi2017analytical} that $\mathrm{EVaR}_\varepsilon$ and $\mathrm{CVaR}_\varepsilon$ are equal only if there are no losses ($c\to -\infty$) below the $\mathrm{VaR}_\varepsilon$ threshold. In addition, EVaR is a strictly monotone risk measure; whereas, CVaR is only monotone~\cite{ahmadi2019portfolio}. $\mathrm{EVaR}_\varepsilon$  is given by
\begin{equation}
    \rho_t(c_{t+1}) = \inf_{\zeta >0} \left(  {\log \left(\frac{\mathbb{E}[e^{\zeta c_{t+1}} \mid \mathcal{F}_{t}]}{\varepsilon}\right)/ \zeta}        \right).            
\end{equation}
Similar to $\mathrm{CVaR}_\varepsilon$, for $\mathrm{EVaR}_\varepsilon$, $\varepsilon \to 1$ corresponds to a risk-neutral case; whereas, $\varepsilon\to 0$ corresponds to a risk-averse case. In fact, it was demonstrated in~\cite[Proposition 3.2]{ahmadi2012entropic} that $\lim_{\varepsilon\to 0} \mathrm{EVaR}_{\varepsilon}(c) = \essinf(c)$. 

\section{Problem Formulation}

In the past two decades, coherent risk and dynamic risk measures have been developed and used in microeconomics and mathematical finance fields~\cite{vose2008risk}. Generally speaking, risk-averse decision making is concerned with the behavior of agents, e.g. consumers and investors, who, when exposed to uncertainty, attempt to lower that uncertainty. The agents may avoid situations with unknown payoffs, in favor of situations with payoffs that are more predictable.

The core idea in risk-averse planning is to replace the conventional risk-neutral conditional expectation of the cumulative cost objectives with the more general coherent risk measures. In path planning scenarios, in particular, we will show in our numerical experiments that considering coherent risk measures will lead to significantly more robustness to environment uncertainty and collisions leading to mission failures.

 In addition to total cost risk-aversity, an agent is often subject to constraints, e.g. fuel, communication, or energy budgets~\cite{7452536}. These constraints can also represent mission objectives, e.g. explore an area or reach a goal.  


Consider a stationary controlled Markov process $\{q_t\}$, $t=0,1,\ldots$ (an MDP or a POMDP) with initial probability distribution $\kappa_0$, wherein policies, transition probabilities, and cost functions do not depend explicitly on time. Each policy $\pi = \{\pi_t\}_{t=0}^\infty$ leads to  cost sequences $\boldsymbol{c}_t=c(q_t,\alpha_t)$, $t=0,1,\ldots$ and $\boldsymbol{d}_t^i=d^i(q_t,\alpha_t)$, $t=0,1,\ldots$, $i=1,2,\ldots,n_c$. We define the dynamic risk of evaluating the $\gamma$-discounted cost of a policy $\pi$ as
\begin{equation}\label{eq:objrisk}
    J_{\gamma}(\kappa_0,\pi) = \rho^{\gamma} \big( c(q_0,\alpha_0),c(q_1,\alpha_1),\ldots \big),
\end{equation}
 and the $\gamma$-discounted dynamic risk constraints of executing policy $\pi$ as
\begin{multline}\label{eq:constraint}
 D_{\gamma}^i(\kappa_0,\pi)=   \rho^{\gamma}\left(  d^i(q_0,\alpha_0),d^i(q_1,\alpha_1),\ldots \right) \le \beta^i, \\ i=1,2,\ldots,n_c,
\end{multline}
where $\rho^{\gamma}$ is defined in equation~\eqref{eq:totaldiscrisk}, $q_0 \sim \kappa_0$, and $\beta^i>0$, $i=1,2,\ldots,n_c$, are given constants. We assume that $c(\cdot,\cdot)$ and $d^i(\cdot,\cdot)$, $i=1,2,\ldots,n_c$, are non-negative and upper-bounded. For a discount factor $\gamma \in (0,1)$, an initial condition $\kappa_0$, and a policy $\pi$, we infer from~\cite[Theorem 3]{ruszczynski2010risk} that both $J_{\gamma}(\kappa_0,\pi)$ and $D_{\gamma}^i(\kappa_0,\pi)$ are well-defined (bounded), if $c$ and $d$ are bounded. 

In this work, we are interested in addressing the following problem:
\vspace{0.2cm}
\begin{problem}\textit{
For a controlled Markov decision process (an MDP or a POMDP), a discount factor $\gamma \in (0,1)$, and a total risk  functional $J_{\gamma}(\kappa_0,\pi)$ as in equation~\eqref{eq:objrisk} and total cost constraints~\eqref{eq:constraint}, where $\{\rho_t\}_{t=0}^\infty$ are coherent risk measures, compute 
\begin{align}
\pi^* \in &~\argmin_{\pi} ~~J_{\gamma}(\kappa_0,\pi) \nonumber \\ & \text{subject to} \quad \boldsymbol{D}_{\gamma}(\kappa_0,\pi) \preceq \boldsymbol{\beta}.
\end{align}
}
\end{problem}
\vspace{0.2cm}
We call a controlled Markov process with the ``nested'' objective~\eqref{eq:objrisk} and constraints~\eqref{eq:constraint} a  \emph{constrained risk-averse Markov process}. 

For MDPs, \cite{chow2015risk,osogami2012robustness} show that such coherent risk measure objectives can account for  modeling errors and parametric uncertainties. We can also interpret Problem 1 as designing policies that minimize the accrued costs in a risk-averse sense and at the same time ensuring that the system constraints, \textit{e.g.}, fuel constraints, are not violated even in the  rare but costly scenarios. 

Note that in Problem 1 both the objective function and the constraints are in general non-differentiable and non-convex in policy $\pi$ (with the exception of total expected cost as the coherent risk measure $\rho^\gamma$~\cite{altman1999constrained}). Therefore, finding optimal policies in general may be hopeless. Instead, we find sub-optimal polices by taking advantage of a Lagrangian formulation and then using an optimization form of Bellman's equations.



Next, we show that the constrained risk-averse problem is equivalent to a non-constrained inf-sup risk-averse problem thanks to the Lagrangian method. 

\begin{proposition}
Let $J_\gamma(\kappa_0)$ be the  value of Problem 1 for a given initial distribution $\kappa_0$ and discount factor $\gamma$. Then, (i) the value function satisfies 
\begin{equation}\label{eq:Dds}
    J_\gamma(\kappa_0) = \inf_{\pi}\sup_{\boldsymbol{\lambda} \succeq \boldsymbol{0}} L_{\gamma}(\pi,{\boldsymbol{\lambda}}),
\end{equation}
where 
\begin{align}\label{eq:lagrangian}
    L_{\gamma}(\pi,\boldsymbol{\lambda}) = J_{\gamma}(\kappa_0,\pi)+ \langle \boldsymbol{\lambda},\left(\boldsymbol{D}_{\gamma}(\kappa_0,\pi)-\boldsymbol{\beta}\right) \rangle,
\end{align}
is the Lagrangian function.\\
(ii) Furthermore, a policy $\pi^*$ is optimal for~Problem 1, if and only if $J_\gamma(\kappa_0)=\sup_{\boldsymbol{\lambda} \succeq \boldsymbol{0}}~L_{\gamma}(\pi^*,\boldsymbol{\lambda})$.
\end{proposition}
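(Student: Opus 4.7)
The plan is to use the standard Lagrangian indicator trick: rewrite the hard inequality constraint $\boldsymbol{D}_\gamma(\kappa_0,\pi) \preceq \boldsymbol{\beta}$ as a supremum penalty over non-negative multipliers, and then swap the roles of the two optimizations.

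For part (i), I would fix an arbitrary policy $\pi$ and evaluate the inner supremum of $L_\gamma(\pi,\boldsymbol{\lambda})$ over $\boldsymbol{\lambda} \succeq \boldsymbol{0}$. Since $J_\gamma(\kappa_0,\pi)$ does not depend on $\boldsymbol{\lambda}$, only the inner-product term matters. If $\pi$ is feasible, \emph{i.e.}, every component of $\boldsymbol{D}_\gamma(\kappa_0,\pi)-\boldsymbol{\beta}$ is non-positive, then $\langle \boldsymbol{\lambda},\boldsymbol{D}_\gamma(\kappa_0,\pi)-\boldsymbol{\beta}\rangle \le 0$ for every $\boldsymbol{\lambda} \succeq \boldsymbol{0}$, and the supremum is attained at $\boldsymbol{\lambda}=\boldsymbol{0}$ with value $0$. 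If $\pi$ is infeasible, there exists an index $i$ with $D^i_\gamma(\kappa_0,\pi)-\beta^i>0$; sending the corresponding multiplier $\lambda_i \to \infty$ drives the inner product, and hence the Lagrangian, to $+\infty$. Therefore
\begin{equation*}
\sup_{\boldsymbol{\lambda}\succeq \boldsymbol{0}} L_\gamma(\pi,\boldsymbol{\lambda}) = \begin{cases} J_\gamma(\kappa_0,\pi), & \boldsymbol{D}_\gamma(\kappa_0,\pi) \preceq \boldsymbol{\beta}, \\ +\infty, & \text{otherwise}. \end{cases}
\end{equation*}
Taking the infimum of both sides over $\pi$ then exactly reproduces Problem~1 (infeasible policies are penalised away), yielding \eqref{eq:Dds}.

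For part (ii), I would argue both directions using the characterisation above. If $\pi^*$ is optimal for Problem~1, then $\pi^*$ is feasible, so the characterisation gives $\sup_{\boldsymbol{\lambda}\succeq \boldsymbol{0}} L_\gamma(\pi^*,\boldsymbol{\lambda}) = J_\gamma(\kappa_0,\pi^*) = J_\gamma(\kappa_0)$. Conversely, if $\sup_{\boldsymbol{\lambda}\succeq \boldsymbol{0}} L_\gamma(\pi^*,\boldsymbol{\lambda}) = J_\gamma(\kappa_0)$, then this supremum is finite (boundedness of $c$ and $\gamma \in (0,1)$ ensure $J_\gamma(\kappa_0)<\infty$ whenever the feasible set is non-empty), so $\pi^*$ must be feasible and $J_\gamma(\kappa_0,\pi^*)=J_\gamma(\kappa_0)$, which by definition makes $\pi^*$ optimal.

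The argument is largely routine; the only points that deserve care are (a) confirming well-definedness of $J_\gamma(\kappa_0,\pi)$ and $D^i_\gamma(\kappa_0,\pi)$, which is already guaranteed by the boundedness assumption on $c,d^i$ together with discounting and the coherence of $\rho^\gamma$ invoked via \cite[Theorem~3]{ruszczynski2010risk}, and (b) handling the degenerate case in which Problem~1 is infeasible, in which case both sides of \eqref{eq:Dds} equal $+\infty$ under the usual convention $\inf \emptyset = +\infty$. I do not anticipate a genuine obstacle: convexity or differentiability of $J_\gamma$ and $\boldsymbol{D}_\gamma$ in $\pi$ is \emph{not} required, since the equivalence of the constrained problem and its inf--sup reformulation is a purely set-theoretic statement about the indicator function of the feasible set.
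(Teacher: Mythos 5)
Your proposal is correct and follows essentially the same route as the paper's own proof: evaluate the inner supremum case-by-case (feasible $\pi$ gives $J_\gamma(\kappa_0,\pi)$ at $\boldsymbol{\lambda}=\boldsymbol{0}$, infeasible $\pi$ is driven to $+\infty$ by sending the offending multiplier to infinity), then take the infimum over $\pi$ to recover Problem~1, with part (ii) read off from this characterisation. Your converse direction of (ii) is in fact slightly more careful than the paper's, since you explicitly note that finiteness of the supremum forces feasibility of $\pi^*$, a step the paper leaves implicit.
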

\begin{proof}
(i) If for some $\pi$ Problem 1 is not feasible, then $\sup_{\boldsymbol{\lambda} \succeq \boldsymbol{0}} L_{\gamma}(\pi,{\lambda})=\infty$. In fact, if the $i$th constraint is not satisfied, i.e., $D_\gamma^i > \beta^i$, we can achieve the latter supremum by choosing $\lambda_i\to \infty$, while keeping the rest of $\lambda^i$s constant or zero. If Problem 1 is feasible for some $\pi$, then the supremum is achieved by setting $\boldsymbol{\lambda}=\boldsymbol{0}$. Hence, $L_\gamma(\lambda,\pi)=J_\gamma(\kappa_0,\pi)$ and 
$$
\inf_{\pi} \sup_{\boldsymbol{\lambda} \succeq \boldsymbol{0}}~L_{\gamma}(\pi,{\lambda}) = \inf_{\pi : \boldsymbol{D}_\gamma(\kappa_0,\pi) \le \boldsymbol{\beta}}~~J_\gamma(\kappa_0,\pi),
$$
which implies~(i). \\
(ii) If $\pi$ is optimal, then, from~\eqref{eq:Dds}, we have 
$$
J_\gamma(\kappa_0) = \sup_{\boldsymbol{\lambda} \succeq \boldsymbol{0}} L_{\gamma}(\pi^*,{\lambda}).
$$
Conversely, if $J_\gamma(\kappa_0) = \sup_{\boldsymbol{\lambda} \succeq \boldsymbol{0}} L_{\gamma}(\pi',{\lambda})$ for some $\pi'$, then from~\eqref{eq:Dds}, we have $\inf_\pi \sup_{\boldsymbol{\lambda} \succeq \boldsymbol{0}} L_\gamma(\pi,\lambda) = \sup_{\boldsymbol{\lambda} \succeq \boldsymbol{0}} L_{\gamma}(\pi',{\lambda})$. Hence, $\pi'$ is the optimal policy. 
\end{proof}
\vspace{0.2cm}



\section{Constrained Risk-Averse MDPs} \label{sec:mdps}

At any time $t$, the value of $\rho_t$ is $\mathcal{F}_t$-measurable and is allowed to depend on the entire history of the process $\{s_0,s_1,\ldots\}$ and we cannot expect to obtain a Markov optimal policy~\cite{ott2010markov,bauerle2011markov}. In order to obtain Markov policies, we need the following property~\cite{ruszczynski2010risk}. 
\vspace{0.2cm}

\begin{defn}[Markov Risk Measure]\label{assum1}\textit{
Let $m,n \in [1,\infty)$ such that $1/m+1/n=1$ and 
$
\mathcal{P} = \big\{p \in \mathcal{L}_n(\mathcal{S}, 2^\mathcal{S}, \mathbb{P}) \mid \sum_{s' \in \mathcal{S}} p(s') \mathbb{P}(s')=1,~p\ge 0 \big\}.
$ 
A one-step conditional risk measure $\rho_t:\mathcal{C}_{t+1}\to \mathcal{C}_t$ is a Markov risk measure with respect to the controlled Markov process $\{s_t\}$, $t=0,1,\ldots$, if there exist a risk transition mapping $\sigma_t: \mathcal{L}_m(\mathcal{S}, 2^\mathcal{S}, \mathbb{P}) \times \mathcal{S} \times \mathcal{P} \to \mathbb{R}$ such that for all $v \in \mathcal{L}_m(\mathcal{S}, 2^\mathcal{S}, \mathbb{P})$ and $\alpha_t \in \pi(s_t)$, we have
\begin{equation}
    \rho_t(v(s_{t+1})) = \sigma_t(v(s_{t+1}),s_t,p(s_{t+1}|s_t,\alpha_t)),
\end{equation}
where $p:\mathcal{S}\times Act \to \mathcal{P}$ is called the controlled kernel.
}
\end{defn}
\vspace{0.2cm}

In fact, if $\rho_t$ is a coherent risk measure, $\sigma_t$ also satisfies the properties of a coherent risk measure (Definition 3). In this paper, since we are concerned with MDPs, the controlled kernel is simply the transition function $T$.
\vspace{0.2cm}
\begin{assum}\label{assum1}\textit{
The one-step coherent risk measure $\rho_t$ is a Markov risk measure.
}
\end{assum}
\vspace{0.2cm}

The simplest case of the risk
transition mapping is in the
conditional expectation case
$\rho_t(v(s_{t+1})) =
\mathbb{E}\{v(s_{t+1}) \mid
s_t,\alpha_t\}$, \textit{i.e.}, 
\begin{multline}\label{eq:fdffd}
\sigma\left\{v(s_{t+1}),s_t,p(s_{t+1}|s_t,\alpha_t) \right\} =
\mathbb{E}\{ v(s_{t+1})\mid s_t,\alpha_t\} \\ =
\sum_{s_{t+1} \in \mathcal{S}} v(s_{t+1}) T(s_{t+1}\mid s_t,\alpha_t).
\end{multline}
Note that in the total discounted expectation case $\sigma$ is a linear function in $v$ rather than a convex function, which is the case for a general coherent risk measures. For example, for the CVaR risk measure, the Markov risk transition mapping is given by
 \begin{multline*}
    \sigma \{ v(s_{t+1}),s_t,p(s_{t+1}|s_t,\alpha_t)  \} \\= \inf_{\zeta \in \mathbb{R}} \left\{ \zeta + \frac{1}{\varepsilon} \sum_{s_{t+1} \in \mathcal{S}}\left(v(s_{t+1})-\zeta\right)_{+} T(s_{t+1}\mid s_t,\alpha_t)        \right\},
 \end{multline*}
where $(\cdot)_{+}=\max\{\cdot, 0\}$ is a convex function in $v$. 

If $\sigma$ is a coherent, Markov risk measure, then the Markov policies are sufficient to ensure optimality~\cite{ruszczynski2010risk}.

In the next result, we show that we can find a lower bound to the solution to Problem 1 via solving an optimization problem.

\vspace{0.2cm}
\begin{theorem}\textit{
Consider an MDP~$\mathcal{M}$   with the nested risk objective~\eqref{eq:objrisk},  constraints~\eqref{eq:constraint}, and discount factor $\gamma \in (0,1)$. Let Assumption~\ref{assum1} hold and $\rho_t,~t=0,1,\ldots$ be  coherent risk measures as described in Definition~\ref{defi:coherent}. Then, the solution $(\boldsymbol{V}^*_\gamma,\boldsymbol{\lambda}^*)$ to the following optimization problem (Bellman's equation)
\begin{align}\label{eq:valueiteration}
  & \sup_{\boldsymbol{V}_\gamma,\boldsymbol{\lambda} \succeq \boldsymbol{0}}~~\langle \boldsymbol{\kappa_0},\boldsymbol{V}_\gamma\rangle - \langle \boldsymbol{\lambda},\boldsymbol{\beta} \rangle \nonumber  \\
        &\text{subject to} \nonumber  \\
        &V_\gamma(s) \le c(s,\alpha) + \langle \boldsymbol{\lambda}, \boldsymbol{d}(s,\alpha)\rangle \nonumber \\ &\quad \quad \quad         +\gamma \sigma\{ {V}_\gamma(s'),s,p(s'|s,\alpha) \},~\forall s \in \mathcal{S},~\forall \alpha \in {Act,}
\end{align}
satisfies
\begin{equation} \label{eq:lowerboundrisk}
    J_\gamma(\kappa_0) \ge \langle \boldsymbol{\kappa_0},\boldsymbol{V}^*_\gamma \rangle-\langle\boldsymbol{\lambda}^*,\boldsymbol{\beta}\rangle.
\end{equation}
}
\end{theorem}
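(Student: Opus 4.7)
The plan is to build a chain of inequalities connecting the optimal value of the semi-infinite program~\eqref{eq:valueiteration} to $J_\gamma(\kappa_0)$, combining three ingredients: (a) Proposition~1, which rewrites Problem~1 as a min-sup Lagrangian problem; (b) a direct unrolling of the per-state constraint in~\eqref{eq:valueiteration}, enabled by Assumption~\ref{assum1}; and (c) subadditivity of $\rho^\gamma$, which supplies the only essential slack and is the reason~\eqref{eq:lowerboundrisk} is an inequality rather than an equality.

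First I would apply Proposition~1 to write $J_\gamma(\kappa_0) = \inf_\pi \sup_{\boldsymbol{\lambda}\succeq\boldsymbol{0}} L_\gamma(\pi,\boldsymbol{\lambda})$ and then invoke the min-max inequality (weak duality) to obtain $J_\gamma(\kappa_0) \ge \sup_{\boldsymbol{\lambda}\succeq\boldsymbol{0}} \inf_\pi L_\gamma(\pi,\boldsymbol{\lambda})$. It then suffices to upper-bound the value of~\eqref{eq:valueiteration} by this Lagrangian dual.

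Next, I would fix any feasible pair $(\boldsymbol{V}_\gamma,\boldsymbol{\lambda})$ of~\eqref{eq:valueiteration} and any Markov policy $\pi$. Because the constraint in~\eqref{eq:valueiteration} holds for \emph{every} $\alpha$, specializing to $\alpha = \pi(s)$ yields $V_\gamma(s) \le c(s,\pi(s)) + \langle\boldsymbol{\lambda},\boldsymbol{d}(s,\pi(s))\rangle + \gamma\,\sigma\{V_\gamma(s'),s,p(s'|s,\pi(s))\}$. Using monotonicity and translation invariance of $\sigma$---both inherited from coherence of $\rho_t$ together with Assumption~\ref{assum1}---I would unroll this inequality one step at a time. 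Boundedness of $c$ and $\boldsymbol{d}$ with $\gamma\in(0,1)$ makes the nested tail vanish, and in the limit the right-hand side is exactly the total discounted risk of the shifted stage cost $\tilde c_t := c(q_t,\alpha_t) + \langle\boldsymbol{\lambda},\boldsymbol{d}(q_t,\alpha_t)\rangle$ under $\pi$ starting from $s$. Averaging over $s\sim\kappa_0$ and taking the infimum over $\pi$ then gives
\[
\langle\boldsymbol{\kappa_0},\boldsymbol{V}_\gamma\rangle \;\le\; \inf_{\pi}\,\rho^\gamma\big(\tilde c_0,\tilde c_1,\ldots\big),
\]
where the right-hand side is evaluated along trajectories of $\pi$ started from $q_0\sim\kappa_0$.

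Finally, since $\rho^\gamma$ is convex and positively homogeneous (as stated right after~\eqref{eq:totaldiscrisk}), it is subadditive, so for every $\pi$ and every $\boldsymbol{\lambda}\succeq\boldsymbol{0}$,
\[
\rho^\gamma(\tilde c_0,\tilde c_1,\ldots) \;\le\; J_\gamma(\kappa_0,\pi) + \langle\boldsymbol{\lambda},\boldsymbol{D}_\gamma(\kappa_0,\pi)\rangle.
\]
Subtracting $\langle\boldsymbol{\lambda},\boldsymbol{\beta}\rangle$ and chaining with the previous display gives $\langle\boldsymbol{\kappa_0},\boldsymbol{V}_\gamma\rangle - \langle\boldsymbol{\lambda},\boldsymbol{\beta}\rangle \le \inf_\pi L_\gamma(\pi,\boldsymbol{\lambda}) \le \sup_{\boldsymbol{\lambda}\succeq\boldsymbol{0}}\inf_\pi L_\gamma(\pi,\boldsymbol{\lambda}) \le J_\gamma(\kappa_0)$; passing to the supremum of the left-hand side over all feasible $(\boldsymbol{V}_\gamma,\boldsymbol{\lambda})$ produces~\eqref{eq:lowerboundrisk}. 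The main obstacle I expect is the unrolling step: turning a per-state, per-action inequality into an infinite-horizon risk statement requires $\sigma$ to preserve ordering and constants at every stage and careful control of the nested tail, both of which hinge on Assumption~\ref{assum1} and coherence of $\rho_t$. The subadditivity step is the single place where a gap is introduced and explains cleanly why~\eqref{eq:lowerboundrisk} cannot be expected to be tight outside the linear (expectation) case.
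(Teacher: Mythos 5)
Your proposal is correct, and its first half is the same as the paper's: apply Proposition~1, use positive homogeneity to pull $\boldsymbol{\lambda}$ inside $\rho^\gamma$, use subadditivity to merge the two risk terms into $\rho^\gamma(\tilde c)$ with $\tilde c = c + \langle\boldsymbol{\lambda},\boldsymbol{d}\rangle$, and swap $\inf$ and $\sup$ via the minimax inequality. Where you diverge is the final step connecting the semi-infinite program~\eqref{eq:valueiteration} to $\inf_\pi \rho^\gamma(\tilde c)$. The paper first invokes the dynamic-programming result of Ruszczy\'nski (Theorem~4 of that reference) to characterize $\inf_\pi\rho^\gamma(\tilde c)$ as the fixed point of the risk-averse Bellman operator, and then argues via monotonicity of the operators $\mathfrak{D}_\pi$ and $\mathfrak{D}$ that the optimization~\eqref{eq:valueiteration2} recovers that fixed point. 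You instead unroll the feasibility constraint directly along any policy, using monotonicity and translation invariance of $\sigma$ and the $\gamma^t$ decay of the bounded tail, to conclude that every feasible $\boldsymbol{V}_\gamma$ pointwise underestimates the risk-to-go. Your route is more self-contained and is exactly calibrated to what the theorem claims (a one-sided bound): it does not require existence or uniqueness of the Bellman fixed point, only that feasible value functions are subsolutions; the paper's route additionally identifies the maximizer $\boldsymbol{V}^*_\gamma$ as the fixed point itself. Two small remarks: your unrolling is stated for Markov policies, but since the constraint holds for every $(s,\alpha)$ pair it applies verbatim to history-dependent policies, so no appeal to sufficiency of Markov policies is actually needed for the bound; and your closing claim that subadditivity is the \emph{single} source of slack is slightly overstated --- the $\inf$--$\sup$ swap is a second, independent source of slack, exactly as in the paper's chain~\eqref{eq:sddsfdsfdsfsdfaaa}.
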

\vspace{0.2cm}
\begin{proof}
From Proposition 1, we have know that~\eqref{eq:Dds} holds. Hence, we have
\begin{align} \label{eq:sddsfdsfdsfsdfaaa}
    J_\gamma(\kappa_0) &= \inf_\pi \sup_{\boldsymbol{\lambda} \succeq \boldsymbol{0}} \left( J_{\gamma}(\kappa_0,\pi)+ \langle \lambda,\left(\boldsymbol{D}_{\gamma}(\kappa_0,\pi)-\boldsymbol{\beta}\right) \rangle \right) \nonumber \\
    & = \inf_\pi \sup_{\boldsymbol{\lambda} \succeq \boldsymbol{0}} \left( J_{\gamma}(\kappa_0,\pi)+\langle \lambda,\boldsymbol{D}_{\gamma}(\kappa_0,\pi)\rangle  -\langle \lambda,\beta\rangle \right) \nonumber \\ &=
    \inf_\pi \sup_{\boldsymbol{\lambda} \succeq \boldsymbol{0}} \left( \rho^\gamma(c)+ \langle \lambda , \rho^\gamma(d) \rangle-\langle \lambda, \beta \rangle \right) \nonumber \\
  & = \inf_\pi \sup_{\boldsymbol{\lambda} \succeq \boldsymbol{0}} \left( \rho^\gamma(c)+  \rho^\gamma(\langle\lambda ,d \rangle)-\langle\lambda, \beta \rangle \right) \nonumber \\
  & \ge \inf_\pi \sup_{\boldsymbol{\lambda} \succeq \boldsymbol{0}} \left( \rho^\gamma(c+\langle \lambda, d\rangle)-\langle \lambda, \beta \rangle \right), \nonumber \\ 
& \ge \sup_{\boldsymbol{\lambda} \succeq \boldsymbol{0}} \inf_\pi \left( \rho^\gamma(c+\langle \lambda, d\rangle)-\langle \lambda, \beta \rangle \right)
\end{align}
wherein the fourth, fifth, and sixth inequalities above we used the positive homogeneity property of $\rho^\gamma$, sub-additivity property of $\rho^\gamma$, and the minimax inequality respectively. Since $\langle \lambda, \beta \rangle$ does not depend on $\pi$, to find the solution the infimum it suffices to find the solution to  
$$
\inf_\pi \rho^\gamma(\tilde{c}),
$$
where $\tilde{c}=c+\lambda'd$. The value to the above optimization can be obtained by solving the following Bellman equation~\cite[Theorem 4]{ruszczynski2010risk}
$$
V_\gamma(s)=\inf_{\alpha \in Act} \Big( \tilde{c}(s,\alpha)+\gamma \sigma\{ V_\gamma(s'),s,p(s'|s,\alpha) \} \Big).
$$
Next, we show that the solution to the above Bellman equation can be alternatively obtained by solving the convex optimization
\begin{align}\label{eq:valueiteration2}
  & \sup_{V_\gamma}~~\langle \kappa_0,V_\gamma\rangle  \nonumber  \\
        &\text{subject to} \nonumber  \\
        &V_\gamma(s) \le \tilde{c}(s,\alpha) + \gamma \sigma\{ V_\gamma(s'),s,p(s'|s,\alpha) \},~\forall s,\alpha.
\end{align}
Define $$\mathfrak{D}_\pi v : = \tilde{c}(s,\pi(s)) + \gamma \sigma\{ v(s'),s,p(s'|s,\pi(s)) \}, \quad \forall s \in \mathcal{S},$$ and $\mathfrak{D} v : = \min_{\alpha \in Act} \left( \tilde{c}(s,\alpha) + \gamma \sigma\{ v(s'),s,p(s'|s,\alpha) \} \right)$ for all $s \in \mathcal{S}$. From \cite[Lemma 1]{ruszczynski2010risk}, we infer that $\mathfrak{D}_\pi$ and $\mathfrak{D}$ are non-decreasing; i.e., for $v\le w$, we have $\mathfrak{D}_\pi v \le \mathfrak{D}_\pi w$ and $\mathfrak{D} v \le \mathfrak{D} w$. Therefore, if $V_\gamma \le \mathfrak{D}_\pi V_\gamma$, then $\mathfrak{D}_\pi V_\gamma \le \mathfrak{D}_\pi(\mathfrak{D}_\pi V_\gamma)$. By repeated application of $\mathfrak{D}_\pi$, we obtain
$$
V_\gamma \le \mathfrak{D}_\pi V_\gamma \le \mathfrak{D}_\pi^2 V_\gamma \le \mathfrak{D}_\pi^\infty V_\gamma=V^*_\gamma.
$$
Any feasible solution to~\eqref{eq:valueiteration2} must satisfy $V_\gamma \ge \mathfrak{D}_\pi V_\gamma$ and hence must satisfy $V_\gamma \ge  V^*_\gamma$. Thus, given that all entries of $\kappa_0$ are positive, $V^*_\gamma$ is the optimal solution to~\eqref{eq:valueiteration2}. Substituting~\eqref{eq:valueiteration2}  back in the last inequality in~\eqref{eq:sddsfdsfdsfsdfaaa} yields the result.  
\end{proof}
\vspace{.2cm}

Once the values of $\boldsymbol{\lambda}^*$ and $\boldsymbol{V}^*_\gamma$ are found by solving optimization problem~\eqref{eq:valueiteration}, we can find the policy as
\begin{align}
    \pi^*(s) \in &~\argmin_{\alpha \in Act}~\Big(  c(s,\alpha) + \langle \boldsymbol{\lambda}^*, \boldsymbol{d}(s,\alpha)\rangle        \nonumber \\ & \quad \quad \quad \quad +\gamma \sigma\{ V^*_\gamma(s'),s,p(s'|s,\alpha) \}   \Big).
\end{align}


One interesting observation is that if the coherent risk measure $\rho^t$ is the total discounted expectation, then  Theorem 1 is consistent with the classical result by~\cite{altman1999constrained} on constrained MDPs. 
\vspace{.2cm}
\begin{corollary}
Let the assumptions of Theorem 1 hold and let $\rho_t(\cdot) = \mathbb{E}(\cdot |s_t,\alpha_t)$, $t=1,2,\ldots$. Then the solution $(\boldsymbol{V}^*_\gamma,\boldsymbol{\lambda}^*)$ to  optimization~\eqref{eq:valueiteration} satisfies
$$
    J_\gamma(\kappa_0) = \langle \boldsymbol{\kappa_0},\boldsymbol{V}^*_\gamma \rangle-\langle\boldsymbol{\lambda}^*,\boldsymbol{\beta}\rangle.
    $$
    Furthermore, with $\rho_t(\cdot) = \mathbb{E}(\cdot |s_t,\alpha_t)$, $t=1,2,\ldots$, optimization~\eqref{eq:valueiteration} becomes a linear program.
\end{corollary}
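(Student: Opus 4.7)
The plan is to revisit the chain of inequalities in the proof of Theorem~1 and argue that each of them becomes an equality when $\rho_t$ is the total conditional expectation. Recall that this chain contains three potentially strict inequalities: (a) the invocation of sub-additivity $\rho^{\gamma}(c)+\rho^{\gamma}(\langle\boldsymbol{\lambda},\boldsymbol{d}\rangle)\ge \rho^{\gamma}(c+\langle\boldsymbol{\lambda},\boldsymbol{d}\rangle)$; (b) the minimax inequality $\inf_\pi\sup_{\boldsymbol{\lambda}\succeq\boldsymbol{0}}\ge \sup_{\boldsymbol{\lambda}\succeq\boldsymbol{0}}\inf_\pi$; and (c) the reduction of the inner Bellman equation to the convex program~\eqref{eq:valueiteration2}, which was already established as an equality inside the proof of Theorem~1 and therefore carries over unchanged. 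Thus only (a) and (b) require further argument.

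For step (a), linearity of the conditional expectation gives $\mathbb{E}[X+Y\mid\mathcal{F}]=\mathbb{E}[X\mid\mathcal{F}]+\mathbb{E}[Y\mid\mathcal{F}]$; unwinding the nested definition~\eqref{eq:totaldiscrisk} stage by stage yields $\rho^{\gamma}(c+\langle\boldsymbol{\lambda},\boldsymbol{d}\rangle)=\rho^{\gamma}(c)+\rho^{\gamma}(\langle\boldsymbol{\lambda},\boldsymbol{d}\rangle)$, so the sub-additivity step becomes an equality. For step (b), I would invoke the classical strong-duality result for constrained expected-cost MDPs from~\cite{altman1999constrained}: the Lagrangian $L_\gamma(\pi,\boldsymbol{\lambda})$ is linear in $\boldsymbol{\lambda}$, and after reparametrizing $\pi$ by its $\gamma$-discounted occupation measure the inner problem becomes an LP over a polyhedral set, so strong duality holds and the minimax inequality is tight. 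Concatenating these equalities with the one already proven in Theorem~1 then gives $J_\gamma(\kappa_0)=\langle\boldsymbol{\kappa}_0,\boldsymbol{V}^*_\gamma\rangle-\langle\boldsymbol{\lambda}^*,\boldsymbol{\beta}\rangle$.

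For the second claim, I would substitute the expectation form~\eqref{eq:fdffd} directly into~\eqref{eq:valueiteration}. Each constraint becomes $V_\gamma(s)\le c(s,\alpha)+\langle\boldsymbol{\lambda},\boldsymbol{d}(s,\alpha)\rangle+\gamma\sum_{s'\in\mathcal{S}}V_\gamma(s')\,T(s'\mid s,\alpha)$, which is affine in the decision variables $(\boldsymbol{V}_\gamma,\boldsymbol{\lambda})$; the objective $\langle\boldsymbol{\kappa}_0,\boldsymbol{V}_\gamma\rangle-\langle\boldsymbol{\lambda},\boldsymbol{\beta}\rangle$ is likewise affine and the feasibility set $\boldsymbol{\lambda}\succeq\boldsymbol{0}$ is polyhedral, so~\eqref{eq:valueiteration} is a finite linear program. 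The step I expect to be the main obstacle is justifying the minimax equality in (b): Sion's theorem does not apply out of the box, because the Lagrangian need not be quasi-convex in $\pi$ when $\pi$ is parametrized by stochastic kernels; the cleanest resolution is to pass to the occupation-measure reformulation and appeal to standard LP strong duality as in~\cite{altman1999constrained}, rather than attempting to swap $\inf$ and $\sup$ directly on the policy side.
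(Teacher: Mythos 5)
Your proposal is correct and follows the same overall skeleton as the paper's proof: identify the two inequalities in the chain \eqref{eq:sddsfdsfdsfsdfaaa} --- sub-additivity of $\rho^\gamma$ and the max--min inequality --- and show each collapses to an equality under total conditional expectation, then observe that the linear risk transition mapping \eqref{eq:fdffd} makes every constraint and the objective of \eqref{eq:valueiteration} affine, hence an LP. Your treatment of step (a) via linearity of conditional expectation and of the LP claim is essentially verbatim what the paper does. Where you genuinely diverge is step (b): the paper asserts that the Lagrangian is convex (linear) in $\pi$ and concave (linear) in $\boldsymbol{\lambda}$ and invokes a minimax theorem directly on the policy-parametrized problem, whereas you pass to the $\gamma$-discounted occupation-measure reformulation and appeal to LP strong duality as in Altman. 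Your route is the more careful one: the discounted occupation measure is a rational, not linear, function of the stochastic kernel $\pi$, so the paper's claim that ``$\mathbb{E}_{\kappa_0}^\pi$ is linear in the policy'' is loose as stated, and your remark that Sion's theorem does not apply out of the box on the kernel parametrization correctly identifies the weak point. The occupation-measure argument buys rigor at the cost of introducing the reformulation machinery; the paper's argument is shorter but implicitly relies on exactly that reparametrization to make its linearity claim legitimate. Both arrive at the same conclusion, and your version would survive scrutiny that the paper's phrasing of step (b) might not.
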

\vspace{0.2cm}
\begin{proof}
From the derivation in~\eqref{eq:sddsfdsfdsfsdfaaa}, we observe the two inequalities are from the application of (a) the sub-additivity property of $\rho^\gamma$ and (b) the max-min inequality. Next, we show that in the case of total expectation both of these properties lead to an equality.\\
(a) Sub-additivity property of $\rho^\gamma$: for total expectation, we have  $$\sum_t \mathbb{E}_{\kappa_0}^\pi \gamma^t c_t+ \sum_t \mathbb{E}_{\kappa_0}^\pi \gamma^t \langle \boldsymbol{\lambda}, \boldsymbol{d}_t \rangle=\sum_t \mathbb{E}_{\kappa_0}^\pi \gamma^t (c_t+\langle \boldsymbol{\lambda}, \boldsymbol{d}_t \rangle).$$ Thus, equality holds. \\
(b) Max-min inequality: in the $\rho^\gamma_{\kappa_0}(\cdot) = \sum_t \mathbb{E}_{\kappa_0}^\pi \gamma^t (\cdot)$ case, both the objective function and the constraints are linear in the decision variables $\pi$ and $\boldsymbol{\lambda}$. Therefore, the sixth line in~\eqref{eq:sddsfdsfdsfsdfaaa} reads as
\begin{align}
& \inf_\pi \sup_{\boldsymbol{\lambda} \succeq \boldsymbol{0}} \left( \rho^\gamma(\boldsymbol{c}+\langle \boldsymbol{\lambda}, \boldsymbol{d}\rangle)-\langle \boldsymbol{\lambda}, \boldsymbol{\beta} \rangle \right) \nonumber \\ 
& = \inf_\pi \sup_{\boldsymbol{\lambda} \succeq \boldsymbol{0}} \left(\sum_{t} \mathbb{E}_{\kappa_0}^\pi \gamma^t (c_t + \langle \boldsymbol{\lambda},\boldsymbol{d}_t\rangle) -\langle \boldsymbol{\lambda}, \boldsymbol{\beta} \rangle \right).
\end{align}
Since the expression inside parantheses above is convex in $\pi$ ($\mathbb{E}_{\kappa_0}^\pi$ is linear in the policy) and concave (linear) in $\boldsymbol{\lambda}$. From Minimax Theorem~\cite{du2013minimax}, we have that the following equality holds
\begin{align*}
&\inf_\pi \sup_{\boldsymbol{\lambda} \succeq \boldsymbol{0}} \left(\sum_{t} \mathbb{E}_{\kappa_0}^\pi \gamma^t (c_t + \langle \boldsymbol{\lambda},\boldsymbol{d}_t\rangle) -\langle \boldsymbol{\lambda}, \boldsymbol{\beta} \rangle \right) \nonumber \\
& = \sup_{\boldsymbol{\lambda} \succeq \boldsymbol{0}} \inf_\pi  \left(\sum_{t} \mathbb{E}_{\kappa_0}^\pi \gamma^t (c_t + \langle \boldsymbol{\lambda},\boldsymbol{d}_t\rangle) -\langle \boldsymbol{\lambda}, \boldsymbol{\beta} \rangle \right).
\end{align*}

Furthermore, from~\eqref{eq:fdffd}, we see that $\sigma$ is linear in $v$ for total expectation. Therefore, the constraint in~\eqref{eq:valueiteration} is  linear in $V_\gamma$ and $\lambda$. Since $\langle \boldsymbol{\kappa_0},\boldsymbol{V}_\gamma\rangle - \langle \boldsymbol{\lambda},\boldsymbol{\beta} \rangle$ is also linear  in $V_\gamma$s and $\lambda$s, optimization~\eqref{eq:valueiteration} becomes a linear program in the case of total expectation coherent risk measure.  
\end{proof}

In~\cite{ahmadi2021aaai}, we presented a method based on difference convex programs to solve~\eqref{eq:valueiteration}, when $\rho^\gamma$ is an arbitrary coherent risk measure and we described the specific structure of the optimization problem for conditional expectation, CVaR, and EVaR. In fact, it was shown that~\eqref{eq:valueiteration} can be written in a standard DCP format as
\begin{align}\label{eq:DCP}
      & \inf_{\boldsymbol{V}_\gamma,\boldsymbol{\lambda} \succeq \boldsymbol{0}}~~ f_0(\boldsymbol{\lambda})-g_0(\boldsymbol{V}_\gamma)  \nonumber  \\
        &\text{subject to} \nonumber  \\
        &f_1({V}_\gamma)-g_1(\boldsymbol{\lambda})-g_2({V}_\gamma) \le 0, ~~\forall s,\alpha.
\end{align}
Optimization problem~\eqref{eq:DCP} is a standard DCP~\cite{horst1999dc}. DCPs arise in  many applications, such as feature selection in  machine learning~\cite{le2008dc} and inverse covariance estimation in statistics~\cite{thai2014inverse}. Although DCPs can be solved globally~\cite{horst1999dc}, \textit{e.g.} using branch and bound algorithms~\cite{lawler1966branch}, a locally optimal solution can be obtained based on techniques of nonlinear optimization~\cite{Bertsekas99} more efficiently. In particular, in this work, we use a variant of the convex-concave procedure~\cite{lipp2016variations,shen2016disciplined}, wherein  the concave terms are replaced by a convex upper bound and solved. In fact, the disciplined convex-concave programming (DCCP)~\cite{shen2016disciplined} technique linearizes DCP problems into a (disciplined) convex program (carried out automatically via the DCCP Python package~\cite{shen2016disciplined}), which is then converted into an equivalent cone program by
replacing each function with its graph implementation. Then, the cone program can be solved readily by available convex programming solvers, such as CVXPY~\cite{diamond2016cvxpy}. 

We end this section by pointing out that solving~\eqref{eq:valueiteration} using the 
DCCP method, only finds the (local) saddle points to optimization problem ~\eqref{eq:valueiteration}. Nevertheless, every saddle point to~\eqref{eq:valueiteration} satisfies~\eqref{eq:lowerboundrisk} (from Theorem 1). In fact, every saddle point is a lower bound of the optimal value of Problem~1.

\section{Constrained Risk-Averse POMDPs} \label{sec:mdps}

Next, we show that, in the case of POMDPs, we can find a lower bound to the solution to Problem 1 via solving an infinite-dimensional optimization problem. Note that a POMDP is equivalent to a belief MDP $\{ b_t \}$, $t=1,2,\ldots$, where $b_t$ is defined in~\eqref{eq:beliefupdate}.

\vspace{0.2cm}
\begin{theorem}\textit{
Consider a POMDP~$\mathcal{PM}$  with the nested risk objective~\eqref{eq:objrisk}  and  constraint~\eqref{eq:constraint} with $\gamma \in (0,1)$. Let Assumption~\ref{assum1} hold, let $\rho_t,~t=0,1,\ldots$ be  coherent risk measures, and suppose $c(\cdot,\cdot)$ and $\{d^i(\cdot,\cdot)\}_{i=1}^{n_c}$ be non-negative and upper-bounded. Then, the solution $(\lambda^*,V^*_\gamma)$ to the following Bellman's equation
\begin{align}\label{eq:valueiterationpomdp}
  & \sup_{\boldsymbol{V}_\gamma,\boldsymbol{\lambda} \succeq 0}~~\langle \boldsymbol{b}_0,\boldsymbol{V}_\gamma\rangle - \langle \boldsymbol{\lambda},\boldsymbol{\beta} \rangle \nonumber  \\
        &\text{subject to} \nonumber  \\
        &V_\gamma(b) \le c(b,\alpha) + \langle \boldsymbol{\lambda}, \boldsymbol{d}(b,\alpha)\rangle \nonumber \\& \quad \quad \quad +\gamma \sigma\{ V_\gamma(b'),b,p(b'|b,\alpha) \},~~\forall b \in \Delta(\mathcal{S}),~\forall \alpha \in {Act,}
\end{align}
where $c(b,\alpha) = \sum_{s \in \mathcal{S}} c(s,\alpha)b(s)$ and $d(b,\alpha) = \sum_{s \in \mathcal{S}} d(s,\alpha)b(s)$ satisfies
\begin{equation} \label{eq:costpomdpbeliefs}
    J_\gamma(b_0) \ge \langle \boldsymbol{b}_0,\boldsymbol{V}^*_\gamma\rangle-\langle\boldsymbol{\lambda}^*,\boldsymbol{\beta}\rangle.
\end{equation}
}
\end{theorem}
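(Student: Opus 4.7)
The plan is to mirror the proof of Theorem 1, lifted to the belief MDP formulation. A POMDP is equivalent to a fully observable (belief) MDP on the continuous state space $\Delta(\mathcal{S})$, with immediate costs $c(b,\alpha)=\sum_{s}c(s,\alpha)b(s)$, constraint costs $d^i(b,\alpha)=\sum_{s}d^i(s,\alpha)b(s)$, and a controlled transition kernel $p(b'|b,\alpha)$ induced by the observation model $O$ and the Bayes update \eqref{eq:beliefupdate}. Since $\rho_t$ is a Markov coherent risk measure in the sense of Definition~\ref{assum1}, the associated risk transition mapping $\sigma$ is also well-defined on this belief MDP (the belief is itself a controlled Markov process), so we may work entirely in belief space.

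First, I would apply Proposition~1 to rewrite
\begin{equation*}
J_\gamma(b_0)=\inf_\pi \sup_{\boldsymbol{\lambda}\succeq\boldsymbol{0}} \Bigl( J_\gamma(b_0,\pi)+\langle \boldsymbol{\lambda},\boldsymbol{D}_\gamma(b_0,\pi)-\boldsymbol{\beta}\rangle\Bigr).
\end{equation*}
Then, repeating the chain of manipulations in \eqref{eq:sddsfdsfdsfsdfaaa}, I would use positive homogeneity of $\rho^\gamma$ to pull each $\lambda_i\ge 0$ inside $\rho^\gamma(d^i)$, apply sub-additivity to bound $\rho^\gamma(c)+\rho^\gamma(\langle\boldsymbol{\lambda},\boldsymbol{d}\rangle)\ge \rho^\gamma(c+\langle\boldsymbol{\lambda},\boldsymbol{d}\rangle)$, and finally swap the inf and sup via the minimax inequality, yielding
\begin{equation*}
J_\gamma(b_0) \ge \sup_{\boldsymbol{\lambda}\succeq\boldsymbol{0}} \inf_\pi \bigl( \rho^\gamma(c+\langle\boldsymbol{\lambda},\boldsymbol{d}\rangle)-\langle\boldsymbol{\lambda},\boldsymbol{\beta}\rangle \bigr).
\end{equation*}

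Next, for fixed $\boldsymbol{\lambda}$, $\inf_\pi \rho^\gamma(\tilde c)$ with $\tilde c = c+\langle\boldsymbol{\lambda},\boldsymbol{d}\rangle$ is a standard risk-averse control problem on the belief MDP. By \cite[Theorem 4]{ruszczynski2010risk} applied to the belief process, its value is characterized by the Bellman equation
\begin{equation*}
V_\gamma(b)=\inf_{\alpha\in Act}\Bigl( \tilde c(b,\alpha)+\gamma\sigma\{V_\gamma(b'),b,p(b'|b,\alpha)\}\Bigr),\quad \forall b\in\Delta(\mathcal{S}).
\end{equation*}
I would then argue, exactly as in the MDP case, that the solution of this Bellman equation coincides with the largest feasible $V_\gamma$ for the inequality-constrained infinite-dimensional program \eqref{eq:valueiterationpomdp}. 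Specifically, defining the Bellman operators $\mathfrak{D}_\pi$ and $\mathfrak{D}$ on bounded functions over $\Delta(\mathcal{S})$, monotonicity of $\sigma$ (inherited from coherence) gives monotonicity of these operators. Hence any $V_\gamma$ satisfying $V_\gamma(b)\le \tilde c(b,\alpha)+\gamma\sigma\{V_\gamma(b'),b,p(b'|b,\alpha)\}$ for every $b,\alpha$ satisfies $V_\gamma\le \mathfrak{D}_\pi V_\gamma\le \mathfrak{D}_\pi^\infty V_\gamma = V_\gamma^*$, so that $\langle \boldsymbol{b}_0,\boldsymbol{V}_\gamma\rangle \le \langle\boldsymbol{b}_0,\boldsymbol{V}_\gamma^*\rangle$ since $b_0\ge 0$. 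Substituting back into the last line of the minimax chain yields~\eqref{eq:costpomdpbeliefs}.

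The main obstacle is that the belief space $\Delta(\mathcal{S})$ is a continuous simplex, so the Bellman fixed-point argument has to be carried out on an infinite-dimensional function space; the contraction property with factor $\gamma<1$, together with boundedness of $c$ and $d^i$, is what makes $\mathfrak{D}_\pi^\infty V_\gamma$ well-defined and independent of the initial iterate. I would also need to verify that the induced risk transition mapping $\sigma\{V_\gamma(b'),b,p(b'|b,\alpha)\}$ is still a coherent, Markov risk measure in the sense of Definition~\ref{assum1} over the belief MDP. Because Bayesian belief updates are deterministic functions of the next observation $o_{t+1}$ (whose distribution is a linear functional of $b_t$ and the transition/observation models), $p(b'|b,\alpha)$ is a well-defined stochastic kernel on $\Delta(\mathcal{S})$, and the coherence properties of $\rho_t$ transfer directly; this yields the desired lower bound and completes the proof.
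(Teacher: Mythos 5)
Your proposal follows essentially the same route as the paper: the paper's proof simply writes the POMDP as a belief MDP with the explicit controlled kernel $p(b'\mid b,\alpha)$ built from the Bayes update and then states that the argument of Theorem~1 carries over verbatim, which is exactly the lift you perform. Your additional remarks on the infinite-dimensionality of the function space over $\Delta(\mathcal{S})$ and the well-definedness of the induced kernel are sound elaborations of points the paper leaves implicit, but they do not constitute a different argument.
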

\begin{proof}
Note that a POMDP can be represented as an MDP over the belief states~\eqref{eq:beliefupdate} with initial distribution~(1). Hence, a POMDP is a controlled Markov process with states $b \in \Delta(\mathcal{S})$, where the controlled belief transition probability is described as
\begin{multline*}
 p(b' \mid b,\alpha) = \sum_{o \in \mathcal{O}} p(b' \mid b, o, \alpha)~p(o \mid b, \alpha)  \\= \sum_{o \in \mathcal{O}}\delta \left(b' - \frac{O(o \mid s,\alpha)\sum_{s' \in \mathcal{S}} T(s \mid s',\alpha)b(s')}{\sum_{s \in \mathcal{S}} O(o \mid s,\alpha)\sum_{s' \in \mathcal{S}} T(s \mid s',\alpha)b(s')}\right) \\
 \times\sum_{s \in \mathcal{S}} O(o \mid s,\alpha)\sum_{s'' \in \mathcal{S}} T(s \mid s'',\alpha)b(s''),
\end{multline*}
with $$\delta(a) = \begin{cases} 1 & a=0, \\ 0 & \text{otherwise}. \end{cases}$$
The rest of the proof follows the same footsteps on Theorem~1 over the belief MDP with $p(b'|b,\alpha)$ as defined above.  
\end{proof}

Unfortunately, since $b\in\Delta(\mathcal{S})$ and hence $V_\gamma:\Delta(\mathcal{S}) \to \mathbb{R}$,  optimization~\eqref{eq:valueiterationpomdp} is infinite-dimensional and we cannot solve it efficiently. 

If the one-step coherent risk measure $\rho_t$ is the total discounted expectation, we can show that optimization problem~\eqref{eq:valueiterationpomdp} simplifies to an infinite-dimensional linear program and equality holds in~\eqref{eq:costpomdpbeliefs}. This can be proved following the same lines as the proof of Corollary~1 but for the belief MDP. Hence, Theorem 2 also provides an optimization based solution to the constrained POMDP problem.

\subsection{Risk-Averse FSC Synthesis via  Policy Iteration}

In order to synthesize risk-averse FSCs, we employ a policy iteration algorithm. Policy iteration incrementally improves a controller by alternating between two steps: Policy Evaluation (computing value functions by fixing the policy) and Policy Improvement (computing the policy by fixing the value functions), until convergence to a satisfactory policy~\cite{bertsekas76}.   
For a risk-averse POMDP, policy evaluation can be carried out by solving~\eqref{eq:valueiterationpomdp}. 
However, as mentioned earlier, ~\eqref{eq:valueiterationpomdp} is difficult to use directly as it must be computed at each (continuous) belief state in the belief space, which is uncountably infinite.

In the following, we show that if instead of considering policies with infinite-memory, we search over finite-memory policies, then we can find suboptimal solutions to~Problem~1 that lower-bound $J_\gamma(\kappa_0)$. To this end, we consider stochastic but finite-memory controllers as described in Section II.C.

Closing the loop around a POMDP with an FSC $\mathcal{G}$ induces a Markov chain. 
The global Markov chain $\mathcal{MC}^{\mathcal{PM},\mathcal{G}}_{\mathcal{S}\times
G}$ (or simply $\mathcal{M C}$, where the stochastic finite state controller and the POMDP are clear from the context) with execution $  \lbrace[s_0,g_0],[s_1,g_1],\dots\rbrace,\ [s_t,\ g_t] \in \mathcal{S}
\times G$. The probability of initial global state $[s_0,g_0]$ is
  \begin{equation*} \label{eq:GlobalMCInitial}
     \iota_{init}\left(\left[s_0,g_0 \right]\right) 
            = \kappa_0(s_0)\kappa(g_0|\kappa_0).
  \end{equation*}
The state transition probability, $T^{\mathcal{M}}$, is given by
  \begin{equation*} \label{eq:GlobalMCTransition}
    \begin{aligned}
      T^{\mathcal{M}} & \left(\left[s_{t+1},g_{t+1}\right] \left|
            \left[s_t,g_t\right] \right. \right)  = \\ 
      \sum_{o\in\mathcal{O}} &
            \sum_{\alpha \in Act}O(o|s_t)\omega(g_{t+1},\alpha |g_t,o)T(s_{t+1}|s_t,\alpha).
    \end{aligned}
  \end{equation*}

\subsection{Risk Value Function Computation}
Under an FSC, the POMDP is transformed into a Markov chain $\mathcal{M}^{\mathcal{PM} \times \mathcal{G}}_{\mathcal{S} \times \mathcal{G}}$ with design probability distributions $\omega$ and $\kappa$. The closed-loop Markov chain $\mathcal{M}^{\mathcal{PM} \times \mathcal{G}}_{\mathcal{S} \times \mathcal{G}}$ is a controlled Markov process with $\{q_t\}=\{[s_t,g_t]\}$, $t=1,2,\ldots$.   In this setting, the total risk functional~\eqref{eq:objrisk} becomes a function of $\iota_{init}$ and FSC $\mathcal{G}$, \textit{i.e.,} 
\begin{multline}\label{eq:objriskfsc}
    J_{\gamma}(\iota_{\mathrm{init}},\mathcal{G}) = \rho^{\gamma} \big( c([s_0,g_0],\alpha_0),c([s_1,g_1],\alpha_1),\ldots \big),\\~~s_0 \sim \kappa_0,~g_0 \sim \kappa,
\end{multline}
where $\alpha_t$s and $g_t$s are drawn from the probability distribution $\omega(g_{t+1},\alpha_t \mid g_t,o_t)$. The constraint functionals $D_{\gamma}^i(\iota_{\mathrm{init}},\mathcal{G})$, $i=1,2,\ldots, n_c$ can also be defined similarly. 

Let $J_\gamma(\boldsymbol{\iota}_{init})$ be the value of Problem 1 under a FSC~$\mathcal{G}$. Then, it is evident that $J_\gamma(\boldsymbol{b}_0) \ge J_\gamma(\boldsymbol{\iota}_{init})$, since FSCs restrict the search space of the policy $\pi$. That is, they can only be as good as the (infinite-dimensional) belief-based policy $\pi(b)$ as $|G|\to \infty$ (infinite-memory).
  

\noindent
{\bf Risk Value Function Optimization:} For POMDPs controlled by stochastic finite state controllers, the dynamic program is developed in the global state space
$\mathcal{S}\times G$.  From Theorem~1, we see that for a given FSC, $\mathcal{G}$, and  POMDP $\mathcal{PM}$, the value function $V_{\gamma,\mathcal{M}}([s,g])$ can be computed by solving the following finite dimensional optimization
\begin{align}\label{eq:valueiterationsfc}
&\sup_{\boldsymbol{V}_{\gamma,\mathcal{M}},\boldsymbol{\lambda}\succeq \boldsymbol{0}}~~\langle \boldsymbol{\iota}_{init},\boldsymbol{V}_{\gamma,\mathcal{M}}\rangle - \langle \boldsymbol{\lambda},\boldsymbol{\beta} \rangle \nonumber  \\ &\text{subject to} \nonumber \\
   & V_{\gamma,\mathcal{M}}([s,g]) \le  \sum_{\alpha \in Act}   p(\alpha \mid g) \tilde{c}([s,g],\alpha) \nonumber \\ & \quad \quad \quad + \gamma \sigma\Big \{ V_{\gamma,\mathcal{M}}([s',g']),[s,g], T^{\mathcal{M}}  \left([s',g'] \left|
            [s,g] \right. \right) \Big\}, \nonumber \\
            &\quad \quad \quad \quad \forall s \in \mathcal{S},~\forall g \in G,
\end{align}
where $p(\alpha \mid g) = {\sum_{g' \in \mathcal{G}, o \in \mathcal{O}} \omega(g',\alpha \mid g,o) O(o|g')}, $ and $\tilde{c}([s,g],\alpha)={c}([s,g],\alpha) + \langle \boldsymbol{\lambda}, \boldsymbol{d}([s,g],\alpha)\rangle$. Then, the solution $(\boldsymbol{V}^*_{\gamma,\mathcal{M}},\boldsymbol{\lambda}^*)$ satisfies
\begin{equation} \label{eq:costpomdpbelief2s}
    J_\gamma(\boldsymbol{\iota}_{init}) \ge \langle \boldsymbol{\iota}_{init},\boldsymbol{V}^*_{\gamma,\mathcal{M}}\rangle-\langle\boldsymbol{\lambda}^*,\boldsymbol{\beta}\rangle.
\end{equation} 

Note that since $\rho^\gamma$ is a coherent, Markov risk measure (Assumption 1), $v \mapsto \sigma(v,\cdot,\cdot)$ is convex (because $\sigma$ is also a coherent risk measure). In fact, optimization problem~\eqref{eq:valueiterationsfc} is indeed a DCP in the form of~\eqref{eq:DCP}, where we should replace $V_\gamma$ with $V_{\gamma,\mathcal{M}}$ and set $f_0(\boldsymbol{\lambda})=\langle \boldsymbol{\lambda},\boldsymbol{\beta} \rangle$, $g_0(\boldsymbol{V}_{\gamma,\mathcal{M}})=\langle \boldsymbol{\iota_{init}},\boldsymbol{V}_{\gamma,\mathcal{M}}\rangle$, $f_1({V}_{\gamma,\mathcal{M}})={V}_{\gamma,\mathcal{M}}$, $g_1(\boldsymbol{\lambda})=\sum_{\alpha \in Act}   p(\alpha \mid g) \tilde{c}([s,g],\alpha)$, and $g_2({V}_{\gamma,\mathcal{M}})=\gamma \sigma({V}_{\gamma,\mathcal{M}},\cdot,\cdot)$. 

The above optimization is in standard DCP form because  $f_0$ and $g_1$ are convex (linear) functions of $\boldsymbol{\lambda}$ and $g_0$, $f_1$, and $g_2$ are convex functions in ${V}_{\gamma,\mathcal{M}}$.


Solving~\eqref{eq:DCP} gives a set of value functions $V_{\gamma,\mathcal{M}}$. In the next section, we discuss how to use the solutions from this DCP in our proposed policy iteration algorithm to sequentially improve the FSC parameters~$\omega$.

\subsection{I-States Improvement}
Let $\vec{ V}_{\gamma,\mathcal{M}}(g) \in \mathbb{R}^{|S|}$ denote the vectorized $ V_{\gamma,\mathcal{M}}([s,g])$ in $s$. We say that an I-state $g$ is  \emph{improved}, if the tunable FSC parameters associated with that I-state can be adjusted so that $\vec{ V}^*_{\gamma,\mathcal{M}}(g)$ increases. 

To begin with,  we compute the initial I-state by finding the best valued I-state for a given initial belief, \textit{i.e.}, $\kappa(g_{{init}})  =  1$, where $$g_{{init}}  =  \underset{g \in G}{\mbox{argmax}}~ \left \langle \boldsymbol{\iota}_{init}, \vec{V}_{\gamma,\mathcal{M}}(g) \right\rangle.$$

After this initialization, we search for FSC parameters $\omega$ that result in an improvement.

\noindent
{\bf I-state Improvement Optimization:} Given value functions $V_{\gamma,\mathcal{M}}([s,g])$ for all $s \in \mathcal{S}$ and $g \in G$ and Lagrangian parameters $\boldsymbol{\lambda}$,
for every I-state $g$, we can find FSC parameters $\omega$ that result in an improvement by solving the following~optimization 
\begin{eqnarray}\label{eq:istateimprovmentCOpt}
    &\underset{\epsilon> 0,\omega(g',\alpha|g,o)}{\max} \ \ \ \epsilon& \nonumber \\
    &{\text{subject to}}&  \nonumber \\
    &{\text{Improvement Constraint:}}&  \nonumber\\
    &V_{\gamma,\mathcal{M}}([s,g]) + \epsilon  \le  \text{r.h.s. of \eqref{eq:valueiterationsfc}},~~ \forall s \in \mathcal{S}, & \nonumber \\
&{\text{Probability Constraints:}}&  \nonumber\\
&\underset{(g',\alpha)\in G\times Act}{\sum}\omega(g',\alpha\mid g,o)=1,~~ \forall o \in \mathcal{O},& \nonumber \\
&\omega(g',\alpha \mid g, o)\ge 0,~~\forall g'\in G, \alpha \in Act, o \in \mathcal{O}.&
\end{eqnarray}

Note that the above optimization searches for $\omega$ values that improve the I-state value vector $\vec{ V}^*_{\gamma,\mathcal{M}}(g)$ by maximizing the auxiliary decision variable $\epsilon$. 

Optimization problem~\eqref{eq:istateimprovmentCOpt} is in general non-convex. This can be inferred from the fact that, although the first term  in the r.h.s. of~\eqref{eq:valueiterationsfc} is linear in $\omega$, its convexity or concavity is not clear in the $\sigma$ term for a general coherent risk measure. Fortunately, we can prove the following result.

\begin{proposition}
Let $\boldsymbol{V}_{\gamma,\mathcal{M}}$ and $\boldsymbol{\lambda}$ be given. Then, the I-State Improvement Optimization~\eqref{eq:istateimprovmentCOpt} is a linear program for conditional expectation and CVaR risk measures. Furthermore,~\eqref{eq:istateimprovmentCOpt} is a convex optimization for EVaR risk measure.
\end{proposition}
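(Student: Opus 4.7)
The plan is to analyze~\eqref{eq:istateimprovmentCOpt} case by case, isolating the only non-linear piece. First I would note the common structure: the objective $\epsilon$ is linear, the probability constraints on $\omega$ are linear, and $\boldsymbol{V}_{\gamma,\mathcal{M}}$ and $\boldsymbol{\lambda}$ are given so $\tilde{c}([s,g],\alpha)$ is a constant. The immediate-cost term $\sum_\alpha p(\alpha\mid g)\tilde{c}([s,g],\alpha)$ is linear in $\omega$ (because $p(\alpha\mid g)$ is), and the kernel $T^{\mathcal{M}}([s',g']\mid[s,g])$ is linear in $\omega$. Hence the only term whose structure depends on the risk measure is $\sigma\{V_{\gamma,\mathcal{M}},[s,g],T^{\mathcal{M}}(\omega)\}$, viewed as a function of the linear map $\omega\mapsto T^{\mathcal{M}}(\omega)$. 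For conditional expectation $\sigma(V,p)=\sum_i V_ip_i$ is linear in $p$, so the improvement constraint is linear in $(\omega,\epsilon)$ and~\eqref{eq:istateimprovmentCOpt} is immediately an LP.

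For the CVaR and EVaR cases, I would invoke their dual (risk-envelope) representations,
\[
\mathrm{CVaR}_{\varepsilon}(V;p)=\max_{q}\Bigl\{\textstyle\sum_i q_iV_i\;:\;0\le q_i\le p_i/\varepsilon,\;\sum_i q_i=1\Bigr\},
\]
\[
\mathrm{EVaR}_{\varepsilon}(V;p)=\max_{Q}\Bigl\{\textstyle\sum_i Q_iV_i\;:\;D_{\mathrm{KL}}(Q\|p)\le -\log\varepsilon,\;\sum_i Q_i=1,\;Q\ge 0\Bigr\},
\]
the second of which I would derive by applying the Donsker--Varadhan variational formula to the primal form $\sigma=\inf_{\zeta>0}(1/\zeta)\log(\mathbb{E}_p[e^{\zeta V}]/\varepsilon)$ together with a minimax exchange. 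In both cases the risk measure appears on the right-hand side of the inequality $V_{\gamma,\mathcal{M}}+\epsilon\le c+\gamma\sigma$ as a supremum, so the constraint is equivalent to the existence of a feasible $q$ (resp.\ $Q$) attaining the bound. I would therefore introduce one auxiliary vector $q_{[s,g]}$ (resp.\ $Q_{[s,g]}$) per constraint and replace $\sigma$ by the linear functional $\sum_{[s',g']}q_{[s,g],[s',g']}V_{\gamma,\mathcal{M}}([s',g'])$.

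For CVaR, the envelope constraints $0\le q_{[s,g],[s',g']}\le T^{\mathcal{M}}([s',g']\mid[s,g])(\omega)/\varepsilon$ and $\sum_{[s',g']}q_{[s,g],[s',g']}=1$ are linear in $(\omega,q)$ because $T^{\mathcal{M}}$ is linear in $\omega$ and $\boldsymbol{V}_{\gamma,\mathcal{M}}$ is fixed, so the enlarged problem is still an LP. For EVaR, the only non-linear constraint is $D_{\mathrm{KL}}(Q_{[s,g]}\|T^{\mathcal{M}}(\cdot\mid[s,g])(\omega))\le -\log\varepsilon$; since $D_{\mathrm{KL}}(\cdot\|\cdot)$ is jointly convex in its two arguments and $T^{\mathcal{M}}$ is linear in $\omega$, this is a convex constraint in $(Q,\omega)$, so the overall problem is convex but in general not linear.

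I expect the main obstacle to be proving the LP structure for CVaR. The primal Rockafellar--Uryasev representation is tempting, but introducing epigraph variables $u_{[s',g']}\ge(V_{\gamma,\mathcal{M}}([s',g'])-\zeta)_+$ produces the bilinear term $u\,T^{\mathcal{M}}(\omega)$ inside the constraint, destroying linearity; and replacing the inner $\inf_\zeta$ by a free variable $\zeta$ inverts the sense of the inequality and makes the problem unbounded. The dual representation circumvents exactly this difficulty because the substitution $q=p\,\xi$ collapses the product of the unknown kernel $p=T^{\mathcal{M}}(\omega)$ and the risk weighting $\xi$ into a single decision variable $q$ whose bounds $q\le p/\varepsilon$ stay linear in $\omega$. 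The same philosophy with relative entropy in place of the box constraints yields the convex (but not linear) EVaR formulation.
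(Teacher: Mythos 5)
Your proof is correct, but it takes a genuinely different route from the paper's. The paper's proof exploits the fact that, in the preceding policy-evaluation step, not only $\boldsymbol{V}_{\gamma,\mathcal{M}}$ and $\boldsymbol{\lambda}$ but also the scalar $\zeta$ appearing in the primal (Rockafellar--Uryasev / Chernoff) representations of CVaR and EVaR has already been computed and is held fixed during I-state improvement. With $\zeta$ frozen, $(V_{\gamma,\mathcal{M}}([s',g'])-\zeta)_+$ and $e^{\zeta V_{\gamma,\mathcal{M}}([s',g'])}$ are constants, so the CVaR term becomes a linear function of $\omega$ through $T^{\mathcal{M}}$ (hence an LP), and the EVaR term becomes $\frac{\gamma}{\zeta}\log(\cdot)$ of a linear function of $\omega$, i.e., concave on the right-hand side of the inequality, hence a convex constraint --- the bilinear obstruction you flag is thereby avoided without any lifting. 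You instead retain the infimum over $\zeta$ and pass to the dual risk-envelope representations, introducing one auxiliary distribution $q_{[s,g]}$ (resp.\ $Q_{[s,g]}$) per improvement constraint; because the risk measure sits on the favorable side of the inequality, your existential reformulation is exact, and the envelope constraints are linear in $(q,\omega)$ for CVaR and jointly convex for EVaR via the KL divergence composed with the linear map $\omega\mapsto T^{\mathcal{M}}$. Your version buys an exact equivalence with the true coherent risk measure, at the cost of extra variables and of justifying the Donsker--Varadhan/minimax step behind the EVaR dual (standard, but it should be cited rather than re-derived); the paper's version is lighter and stays in the original variables, but it is really a statement about the frozen-$\zeta$ surrogate, which upper-bounds the risk term and hence relaxes the improvement constraint relative to the exact one. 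Both arguments correctly establish the claimed LP/convex structure.
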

\begin{proof}
We present different forms of the Improvement Constraint in~\eqref{eq:istateimprovmentCOpt} for different risk measures. Note that the rest of the constraints and the cost function are linear in the decision variables $\epsilon$ and $\omega$. The Improvement Constraint in~\eqref{eq:istateimprovmentCOpt} is linear in $\epsilon$. However, its convexity or concavity in $\omega$ changes depending on the risk measure one considers. We recall from the previous section that in the Policy Evaluation step, the quantities for ${\boldsymbol{V}}_{\gamma,\mathcal{M}}$ and ${\boldsymbol{\lambda}} \succeq \boldsymbol{0}$ (for conditional expectation, CVaR, and EVaR measures) and $\zeta$ for (CVaR and EVaR measures) are calculated and therefore fixed here.

For conditional expectation, the Improvement Constraint alters to
\begin{multline}
V_{\gamma,\mathcal{M}}([s,g]) +\epsilon \le  \sum_{\alpha \in Act}   p(\alpha \mid g) \tilde{c}([s,g],\alpha) \\ + \gamma \sum_{s' \in \mathcal{S}, g' \in \mathcal{G}} V_{\gamma,\mathcal{M}}([s',g']) T^{\mathcal{M}}  \left([s',g'] \left|
            [s,g] \right. \right) , \\  ~~\forall s \in \mathcal{S},~\forall g \in G.
\end{multline}
Substituting the expression for $T^\mathcal{M}$, i.e., 
$$
T^{\mathcal{M}}  \left([s',g'] \left|
            [s,g] \right. \right) =\sum_{o\in\mathcal{O}} 
            \sum_{\alpha \in Act}O(o|s)\omega(g',\alpha |g,o)T(s'|s,\alpha),
$$
and $p(\alpha \mid g)$, i.e., 
$$
p(\alpha \mid g) = {\sum_{g' \in \mathcal{G}, o \in \mathcal{O}} \omega(g',\alpha \mid g,o) O(o|g')},
$$
we obtain
\begin{multline}
    V_{\gamma,\mathcal{M}}([s,g]) +\epsilon \le    {\sum_{ \alpha, g', o } \omega(g',\alpha \mid g,o) O(o|g')} \tilde{c}([s,g],\alpha)  \\+ \gamma \sum_{s', g',o,\alpha} V_{\gamma,\mathcal{M}}([s',g']) O(o|s)\omega(g',\alpha |g,o)T(s'|s,\alpha) , \\  ~~\forall s \in \mathcal{S},~\forall g \in G.
\end{multline}
The above expression is linear in $\omega$ as well as $\epsilon$. Hence, I-State Improvement Optimization becomes a linear program for conditional expectation risk measure.

Based on a similar construction, for CVaR measure, the Improvement Constraint changes to 
\begin{multline}
        V_{\gamma,\mathcal{M}}([s,g]) +\epsilon \le  \sum_{\alpha, g', o} \omega(g',\alpha \mid g,o) O(o|g') \tilde{c}([s,g],\alpha)  \\+ \gamma  \bigg\{ \zeta + \frac{1}{\varepsilon} \sum_{g',s'} \left( V_{\gamma,\mathcal{M}}\left([s',g']\right)-\zeta\right)_{+}  T^{\mathcal{M}}([s',g']|[s,g]) \bigg\},\\~~\forall s \in \mathcal{S},~\forall g \in G.
\end{multline}
After substituting the term for $T^\mathcal{M}$, we obtain
\begin{multline} \label{vscsssaa}
        V_{\gamma,\mathcal{M}}([s,g]) +\epsilon \le  \sum_{\alpha,g', o} \omega(g',\alpha \mid g,o) O(o|g') \tilde{c}([s,g],\alpha)  \\+ \gamma \bigg\{ \zeta + \frac{1}{\varepsilon} \sum_{g',s',o,\alpha} \left( V_{\gamma,\mathcal{M}}\left([s',g']\right)-\zeta\right)_{+}   O(o|s) \times \\ \omega(g',\alpha |g,o)T(s'|s,\alpha)  \bigg\},~~\forall s \in \mathcal{S},~\forall g \in G.
\end{multline}
\begin{figure*}[!t]
\normalsize
\begin{subequations}\label{eq:istateimprovmentCOptapp}
\begin{eqnarray}
    &\underset{\epsilon> 0,\omega(g',\alpha|g,o)}{\max} \ \ \ \langle \boldsymbol{\iota}_{init},V_{\gamma,\mathcal{M}}\rangle - \langle \lambda,\beta \rangle+\epsilon& \nonumber \\
    &{\text{subject to}}&  \nonumber \\
    &{\text{Improvement Constraint:}}&  \nonumber\\
    &V_{\gamma,\mathcal{M}}([s,g]) + \epsilon - \sum_{\alpha, g', o } \omega(g',\alpha \mid g,o) O(o|g') \tilde{c}([s,g],\alpha) \quad \quad \quad \quad \quad \quad \quad \quad \quad \quad \quad \quad \quad \quad \quad \quad \quad \quad \quad \quad \quad \quad \quad \quad &  \\& \quad \quad \quad \quad \quad \quad \quad- \gamma \bigg\{ \zeta + \frac{1}{\varepsilon} \sum_{g',s',o,\alpha} \left( V_{\gamma,\mathcal{M}}\left([s',g']\right)-\zeta\right)_{+}   O(o|s)\omega(g',\alpha |g,o)T(s'|s,\alpha)  \bigg\} \le 0,~~\forall s \in \mathcal{S},~\forall g \in G, & \nonumber \\
&{\text{Probability Constraints:}}&  \nonumber\\
&\underset{(g',\alpha)\in G\times Act}{\sum}\omega(g',\alpha\mid g,o)=1,~~ \forall o \in \mathcal{O},& \nonumber \\
&\omega(g',\alpha \mid g, o)\ge 0,~~\forall g'\in G, \alpha \in Act, o \in \mathcal{O}.&
\end{eqnarray}
\end{subequations}
\hrulefill
\vspace*{4pt}
\end{figure*}

Furthermore, for fixed $V_{\gamma,\mathcal{M}}$, $\lambda$, and $\zeta$, the above inequality is linear in $\omega$ and $\epsilon$. Hence,~\eqref{vscsssaa} becomes a linear constraint rendering~~\eqref{eq:istateimprovmentCOpt} a linear program (maximizing a linear objective subject to linear constraints), i.e., optimization problem~\eqref{eq:istateimprovmentCOptapp}.

For the EVaR measure, the Improvement Constraint is given by
\begin{multline}
        V_{\gamma,\mathcal{M}}([s,g]) +\epsilon \le  \sum_{\alpha,g', o } \omega(g',\alpha \mid g,o) O(o|g') \tilde{c}([s,g],\alpha)  \\+ \gamma \Bigg\{ \frac{1}{\zeta } \log \left(\frac{\sum_{g',s'}e^{ \zeta {V}_{\gamma,\mathcal{M}}([s',g'])} T^{\mathcal{M}}([s',g']|[s,g])}{\varepsilon} \right)      \Bigg\},\\~~\forall s \in \mathcal{S},~\forall g \in G.
\end{multline}

Substituting the expression for $T^\mathcal{M}$, i.e., 
$$
T^{\mathcal{M}}  \left([s',g'] \left|
            [s,g] \right. \right) =\sum_{o\in\mathcal{O}} 
            \sum_{\alpha \in Act}O(o|s)\omega(g',\alpha |g,o)T(s'|s,\alpha),
$$
we obtain
\begin{multline} \label{eq:cxcvvxvxvx}
    V_{\gamma,\mathcal{M}}([s,g]) + \epsilon \le  \sum\limits_{\alpha, g', o } \omega(g',\alpha \mid g,o) O(o|g') \tilde{c}([s,g],\alpha) \quad \quad \quad \quad \quad \quad \quad \quad \quad \quad \quad \quad \quad \quad \quad \quad \quad \quad \quad \quad \quad \quad \quad \quad   \\ +  \frac{\gamma}{\zeta } \log \left(\frac{\sum\limits_{g',s',o,\alpha}e^{ \zeta {V}_{\gamma,\mathcal{M}}([s',g'])} O(o|s)\omega(g',\alpha |g,o)T(s'|s,\alpha)}{\varepsilon} \right),\\~~\forall s \in \mathcal{S},~\forall g \in G, 
\end{multline}

\begin{figure*}[!t]
\normalsize
\begin{subequations}\label{eq:istateimprovmentCOptfdffgfg}
\begin{eqnarray}
    &\underset{\epsilon> 0,\omega(g',\alpha|g,o)}{\max} \ \ \ \langle \boldsymbol{\iota}_{init},V_{\gamma,\mathcal{M}}\rangle - \langle \lambda,\beta \rangle+\epsilon& \nonumber \\
    &{\text{subject to}}&  \nonumber \\
    &{\text{Improvement Constraint:}}&  \nonumber\\
    &V_{\gamma,\mathcal{M}}([s,g]) + \epsilon - \sum_{\alpha, g', o } \omega(g',\alpha \mid g,o) O(o|g') \tilde{c}([s,g],\alpha) \quad \quad \quad \quad \quad \quad \quad \quad \quad \quad \quad \quad \quad \quad \quad \quad \quad \quad \quad \quad \quad \quad \quad \quad &  \\& \quad \quad \quad \quad \quad \quad \quad- \gamma \Bigg\{ \frac{1}{\zeta } \log \left(\frac{\sum_{g',s',o,\alpha}e^{ \zeta {V}_{\gamma,\mathcal{M}}([s',g'])} O(o|s)\omega(g',\alpha |g,o)T(s'|s,\alpha)}{\varepsilon} \right)      \Bigg\} \le 0,~~\forall s \in \mathcal{S},~\forall g \in G, & \nonumber \\
&{\text{Probability Constraints:}}&  \nonumber\\
&\underset{(g',\alpha)\in G\times Act}{\sum}\omega(g',\alpha\mid g,o)=1,~~ \forall o \in \mathcal{O},& \nonumber \\
&\omega(g',\alpha \mid g, o)\ge 0,~~\forall g'\in G, \alpha \in Act, o \in \mathcal{O}.&
\end{eqnarray}
\end{subequations}
\hrulefill
\vspace*{4pt}
\end{figure*}
In the above inequality, the first term on the right-hand side of the is linear in $\omega$ and the second term on the right-hand side (logarithm term) is concave in $\omega$ (convex if all terms are moved to the left side, since $-\log(x)$ is convex in $x$). Therefore,~\eqref{eq:cxcvvxvxvx} becomes a convex constraint rendering~\eqref{eq:istateimprovmentCOpt} a convex optimization problem (maximizing a linear objective subject to linear and convex constraints) for EVaR measures. That is, the I-State Improvement Optimization takes the convex optimization form of~\eqref{eq:istateimprovmentCOptfdffgfg}.  
\end{proof}


If no improvement is achieved by optimization~\eqref{eq:istateimprovmentCOpt}, \textit{i.e.,} $\epsilon = 0$, for fixed number of internal states $|G|$, we can increase  $|G|$  by one following the footsteps of the bounded policy iteration method proposed in~\cite[Section V.B]{ahmadi2020risk}.

\subsection{Policy Iteration Algorithm}

Algorithm \ref{algo:policyiteration} outlines the main steps in the proposed policy iteration method for the constrained risk-averse FSC synthesis. The algorithm has two distinct parts. First, for fixed  parameters of the FSC ($\omega$), policy evaluation is carried out, in which $V_{\gamma,\mathcal{M}}([s,g])$ and $\lambda$ are computed using  DCP~\eqref{eq:valueiterationsfc} (Steps 2, 10 and 18).
 Second, after evaluating the current value functions and the Lagrange multipliers, an improvement is carried out either by changing the parameters of existing I-states via optimization~\eqref{eq:istateimprovmentCOpt}, or if no new parameters can improve any I-state, then a fixed number of I-states are added to escape the local minima (Steps 14-17) based on the method proposed in~\cite[Section V.B]{ahmadi2020risk}.

\begin{algorithm}[t]
\caption{Policy Iteration For Synthesizing Constrained Risk-Averse FSC}
\label{algo:policyiteration}
\begin{algorithmic}[1]
\REQUIRE (a) An initial feasible FSC, $\mathcal{G}$. (b) Maximum size of FSC $N_{max}$. (c) $N_{new} \le N_{max}$ number of I-states
\STATE $improved \leftarrow True$
\STATE Compute the value vectors, $\vec{V}_{\gamma,\mathcal{M}}$ and Lagrange multipliers $\boldsymbol{\lambda}$, based on  DCP~\eqref{eq:valueiterationsfc}.
\WHILE {$|G| \le N_{max}$ \AND $improved = True$}
\STATE $improved \leftarrow False$
\FORALL {I-states $g \in G$}
\STATE  Solve the I-State Improvement Optimization~\eqref{eq:istateimprovmentCOpt}.
\IF {I-State Improvement Optimization results in $\epsilon > 0$}
\STATE Replace the parameters $\omega$ for I-state $g$
\STATE $improved \leftarrow True$
\STATE Compute the value vectors, $\vec{V}_{\gamma,\mathcal{M}}$ and Lagrange multipliers $\boldsymbol{\lambda}$, based on  optimization~\eqref{eq:valueiterationsfc}.
\ENDIF
\ENDFOR
\IF {$improved = False$ \AND $|G| < N_{max}$}
\STATE $n_{added} \leftarrow 0$
\STATE $N'_{new} \leftarrow \min(N_{new},N_{max}-|G|)$
\STATE {Try to add $N'_{new}$ I-state(s) to $\mathcal{G}$.}
\STATE $n_{added} \leftarrow $ actual number of I-states added in previous step.
\IF {$n_{added} > 0$}
\STATE $improved \leftarrow True$
\STATE Compute the value vectors, $\vec{V}_{\gamma,\mathcal{M}}$ and Lagrange multipliers $\boldsymbol{\lambda}$, based on  optimization~\eqref{eq:valueiterationsfc}.
\ENDIF
\ENDIF
\ENDWHILE
\ENSURE $\mathcal{G}$
\end{algorithmic}
\end{algorithm}

\section{Numerical Experiments}\label{sec:example}

In this section, we evaluate the proposed methodology with numerical experiments. In addition to the traditional total expectation, we consider two other coherent risk measures, namely, CVaR and EVaR. All experiments were carried out on a MacBook Pro with 2.8 GHz Quad-Core Intel Core i5 and 16 GB of RAM. The resultant linear programs and DCPs were solved using CVXPY~\cite{diamond2016cvxpy} with DCCP~\cite{shen2016disciplined} add-on in Python.

\subsection{Rover MDP Example Set Up}

 \begin{figure}[t] \label{fig:gwa_again}\centering{
\includegraphics[scale=.35]{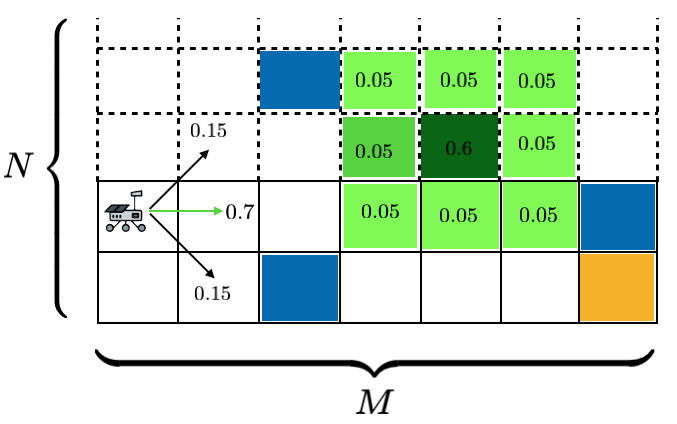}
\vspace{-0.4cm}
\caption{Grid world illustration for the rover navigation example. Blue cells denote the obstacles and the yellow cell denotes the goal.}
\vspace{-.5cm}
} 
 \end{figure}
 
An  agent (e.g. a rover) must autonomously navigate a 2-dimensional terrain map (e.g. Mars surface) represented by an $M \times N$ grid with $0.25 MN$ obstacles. The state space is given by
 $ \mathcal{S} = \{s_{i}|i=x+My,x\in\{1,\dots,M\},y \in \{1,\dots,N\}\}. $ 
The action set available to the robot is $ Act = \{E,\ W,\ N,\ S,\ NE,\ NW,\ SE,\ SW\}$. The state transition probabilities for various cell types are shown for actions $E$ in Figure 2, i.e., the agent moves to the grid implied by the action with $0.7$ probability but can also move to any adjacent ones with $0.3$ probability. Partial observability arises because the rover cannot determine obstacle cell
location from measurements directly. The observation space is
$ \mathcal{O}=\{o_{i} |i=x+My, x\in\{1,\dots,M\},y\in\{1,\dots,N\}\}.$ Once at an adjacent cell to an obstacle, the rover can identify an actual obstacle position (dark green) with probability $0.6$, and a distribution over the nearby cells (light green). 

 Hitting an obstacle incurs the immediate cost  of $10$, while the goal grid region has zero immediate cost. Any other grid has a cost of $2$ to represent fuel consumption. The discount factor is set to $\gamma=0.95$.

The objective is to compute a safe path that is fuel efficient, \textit{i.e.,} solving Problem 1. To this end, we consider total expectation, CVaR, and EVaR as the coherent risk measure. 

Once a policy is calculated, as a robustness test, inspired by~\cite{chow2015risk}, we included a set of single grid obstacles that are perturbed in a random direction to one of the neighboring grid
 cells with probability $0.3$ to represent uncertainty in the terrain map. For each risk measure, we run $100$ Monte Carlo simulations with the calculated policies and count failure rates, \textit{i.e.,} the number of times a collision has occurred during a run.


\subsection{ MDP Results}



To evaluate the technique discussed in Section~IV, we assume that there is no partial observation. In our experiments, we consider four grid-world sizes of $10\times 10$, $15 \times 15$,  $20 \times 20$, and $30 \times 30$ corresponding to $100$, $225$, $400$, and $900$ states, respectively. For each grid-world, we randomly allocate 25\% of the grids to obstacles, including 3, 6, 9, and 12 uncertain (single-cell) obstacles for the $10\times 10$, $15 \times 15$, $20 \times 20$, and $30 \times 30$ grids, respectively.  In each case, we solve DCP~\eqref{eq:valueiteration} (linear program in the case of total expectation) with $|\mathcal{S}||Act|=MN \times 8 = 8MN$ constraints and $MN+2$ variables (the risk value functions $V_\gamma$'s, Langrangian coefficient $\lambda$, and $\zeta$ for CVaR and EVaR). In these experiments, we set $\varepsilon= 0.2$ for CVaR and EVaR coherent risk measures to represent risk-averse policies. The fuel budget (constraint bound $\beta$) was set to 50, 10, 200, and 600 for the $10\times 10$, $15 \times 15$, $20 \times 20$, and $30 \times 30$ grid-worlds, respectively. The initial condition was chosen as $\kappa_0(s_M)=M-1$, \textit{i.e.,} the agent starts at the second left most grid at the bottom.

A summary of our numerical experiments is provided in Table 1. Note the computed values of Problem 1 satisfy $\mathbb{E}(c)\le \mathrm{CVaR}_\varepsilon(c) \le \mathrm{EVaR}_\varepsilon(c)$, which is consistent with that fact that EVaR is a more conservative coherent risk measure than CVaR~\cite{ahmadi2012entropic}. 

\begin{table}[t!]
\label{table:comparison}
\centering
\setlength\tabcolsep{2.5pt}
\begin{tabular}{lccccc} \midrule
\makecell{$(M \times N)_{\rho_t}$}  & \makecell{$J_\gamma(\kappa_0)$}  & \makecell{ Total \\  Time [s] }  & \makecell{\# U.O.}   & \makecell{ F.R.}   \\ 
\midrule
$(10\times 10)_{\mathbb{E}}$           & 9.12                          & 0.8 & 3                & 11\%                                        \\[1.5pt]
$(15\times 15)_{\mathbb{E}}$          & 12.53                          &  0.9  & 6           & 23\%                                             \\[1.5pt]
$(20\times 20)_{\mathbb{E}}$         & 19.93                          &  1.7 & 9               & 33\%                                         \\[1.5pt]
$(30\times 30)_{\mathbb{E}}$         & 27.30                          &  2.4 & 12               & 41\%                                         \\[1.5pt]
\midrule
$(10\times 10)_{\text{CVaR}_{0.7}}$            & $\ge$12.04                          &  5.8 & 3            &8\%                                            \\[1.5pt]
$(15\times 15)_{\text{CVaR}_{0.7}}$           & $\ge$14.83                         & 9.3   & 6           &18\%                                          \\[1.5pt]
$(20\times 20)_{\text{CVaR}_{0.7}}$         & $\ge$20.19                          & 10.34   & 9          &19\%                                           \\[1.5pt]
$(30\times 30)_{\text{CVaR}_{0.7}}$         & $\ge$34.95                          &  14.2 & 12               & 32\%                                         \\[1.5pt]
$(10\times 10)_{\text{CVaR}_{0.2}}$            & $\ge$14.45                          &  6.2 & 3            &3\%                                            \\[1.5pt]
$(15\times 15)_{\text{CVaR}_{0.2}}$           & $\ge$17.82                         & 9.0   & 6           &5\%                                          \\[1.5pt]
$(20\times 20)_{\text{CVaR}_{0.2}}$         & $\ge$25.63                          & 11.1   & 9          &13\%                                           \\[1.5pt]
$(30\times 30)_{\text{CVaR}_{0.2}}$         & $\ge$44.83                          &  15.25 & 12               & 22\%                                         \\[1.5pt]
\midrule
$(10\times 10)_{\text{EVaR}_{0.7}}$            & $\ge$14.53                         & 4.8  & 3          &4\%                                            \\[1.5pt]
$(15\times 15)_{\text{EVaR}_{0.7}}$          & $\ge$16.36                         & 8.8         &  6         &11\%                                      \\[1.5pt]
$(20\times 20)_{\text{EVaR}_{0.7}}$            & $\ge$29.89                         & 10.5      & 9           &15\%                                       \\[1.5pt]
$(30\times 30)_{\text{EVaR}_{0.7}}$         & $\ge$54.13                          &  14.99 & 12               & 12\%                                         \\[1.5pt]
$(10\times 10)_{\text{EVaR}_{0.2}}$            & $\ge$18.03                         & 5.8  & 3          &1\%                                            \\[1.5pt]
$(15\times 15)_{\text{EVaR}_{0.2}}$          & $\ge$21.10                         & 8.7         &  6         &3\%                                      \\[1.5pt]
$(20\times 20)_{\text{EVaR}_{0.2}}$            & $\ge$24.08                         & 10.2      & 9           &7\%                                       \\[1.5pt]
$(30\times 30)_{\text{EVaR}_{0.2}}$         & $\ge$63.04                          &  14.25 & 12               & 10\%                                         \\[1.5pt]
\midrule
\end{tabular}
\caption{Comparison between total expectation, CVaR, and EVaR coherent risk measures. $(M \times N)_{\rho_t}$ denotes experiments with grid-world of size $M \times N$ and one-step coherent risk measure $\rho_t$. $J_\gamma(\kappa_0)$ is the valued of the constrained risk-averse problem (Problem 1). Total Time denotes the time taken by the CVXPY solver to solve the associated linear programs or DCPs in seconds. $\#$ U.O. denotes the number of single grid uncertain obstacles used for robustness test.  F.R. denotes the failure rate out of 100 Monte Carlo simulations with the computed policy. }
\end{table}

  \begin{figure*}[!h] \label{fig:mdp}\centering{
\includegraphics[scale=.23]{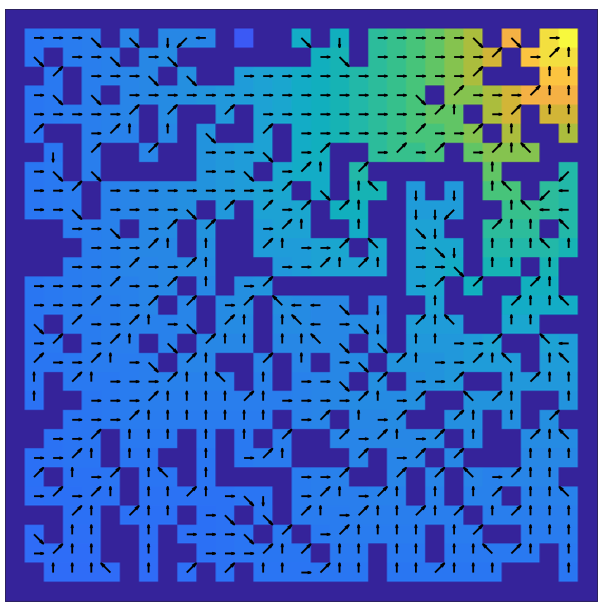}~
\hspace{.5cm}
\includegraphics[scale=.23]{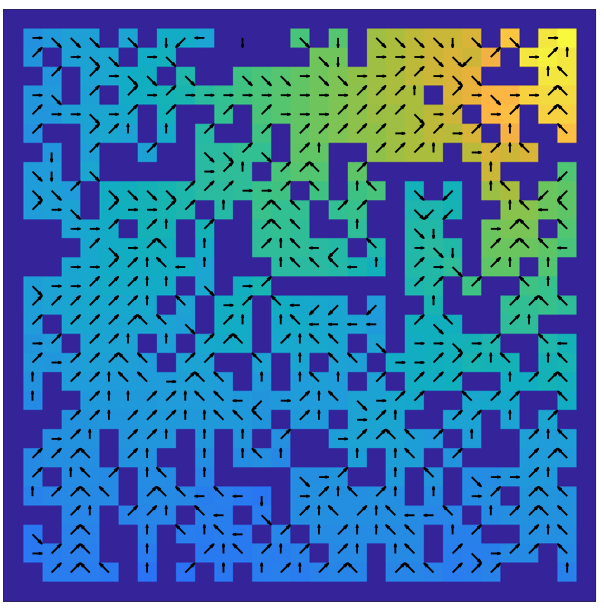}
\hspace{.5cm}
\includegraphics[scale=.23]{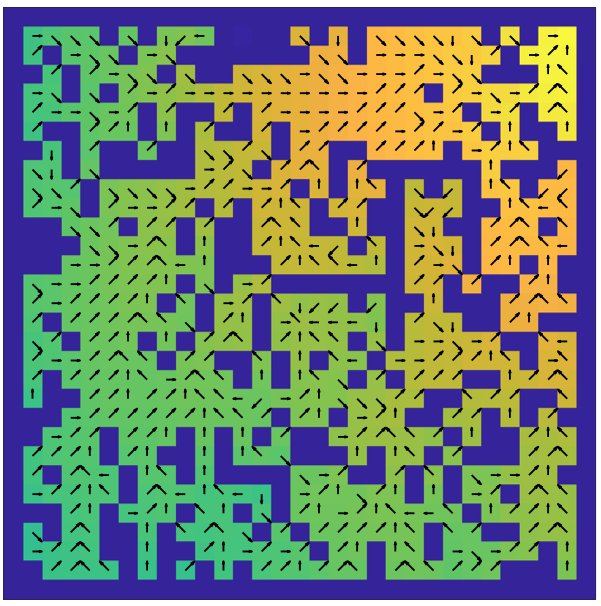}
\vspace{-0cm}
\caption{Results for the MDP example with total expectation (left), CVaR (middle), and EVaR (right) coherent risk measures. The goal is located at the yellow cell. Notice the 9 single cell obstacles used for robustness test.}
} 
 \end{figure*}

For total expectation coherent risk measure, the calculations took significantly less time, since they are the result of solving a set of linear programs. For CVaR and EVaR, a set of DCPs were solved. CVaR calculation was the most computationally involved. This observation is consistent with~\cite{ahmadi2019portfolio} were it was discussed that EVaR calculation is much more efficient than CVaR. Note that these calculations can be carried out offline for policy synthesis and then the policy can be applied for risk-averse robot path planning.

The table also outlines the failure ratios of each risk measure. In this case, EVaR outperformed both CVaR and total expectation in terms of robustness, which is consistent with the fact that EVaR is more conservative. In addition, these results imply that, although discounted total expectation is a measure of performance in high number of Monte Carlo simulations, it may not be practical to use it for mission-critical decision making under uncertainty scenarios. CVaR and especially EVaR seem to be a more reliable metric for performance in planning under uncertainty.

For the sake of illustrating the computed policies, Figure~3 depicts the results obtained from solving DCP~\eqref{eq:valueiteration} for a $30\times 30$ grid-world. The arrows on grids depict the (sub)optimal actions and the heat map indicates the values of Problem 1 for each grid state.  Note that the values for EVaR are greater than those for CVaR and the values for CVaR are greater from those of total expectation. This is in accordance with the theory that $\mathbb{E}(c)\le \mathrm{CVaR}_\varepsilon(c) \le \mathrm{EVaR}_\varepsilon(c)$~\cite{ahmadi2012entropic}. In addition, by inspecting the computed actions in obstacle dense areas of the grid-world (for example, the middle right area), we infer that the actions in more risk-averse cases (especially, for EVaR) have a higher tendency to steer the agent away from the obstacles given the diagonal transition uncertainty as depicted in Figure 2; whereas, for total expectation, the actions are merely concerned about reaching the goal.
\vspace{-0.3cm}

\subsection{POMDP Results}

In our experiments, we consider two grid-world sizes of $10\times 10$ and $20 \times 20$ corresponding to $100$ and $400$ states, respectively. For each grid-world, we allocate 25\% of the grid to obstacles, including 8, and 16 uncertain (single-cell) obstacles for the $10\times 10$ and $20 \times 20$ grids, respectively.  In each case, we run Algorithm 1 for risk-averse FSC synthesis with $N_{max}=6$ and a maximum number of 100 iterations were considered. 


In these experiments, we set the confidence level $\varepsilon= 0.15$ for CVaR and EVaR coherent risk measures. The fuel budget (constraint bound $\beta$) was set to 50 and 200 for the $10\times 10$ and $20 \times 20$ grid-worlds, respectively. The initial condition was chosen as $\kappa_0(s_M)=1$, \textit{i.e.,} the agent starts at the right most grid at the bottom. 

A summary of our numerical experiments is provided in Table 1. Note the computed values of Problem 1 satisfy $\mathbb{E}(c)\le \mathrm{CVaR}_\varepsilon(c) \le \mathrm{EVaR}_\varepsilon(c)$~\cite{ahmadi2012entropic}. 

\begin{table}[t!]
\label{table:comparison}
\centering
\setlength\tabcolsep{2.5pt}
\begin{tabular}{lccccc} \midrule
\makecell{$(M \times N)_{\rho_t}$}  & \makecell{$J_\gamma(\iota_{init})$}  & \makecell{ AIT  [s] }  & \makecell{\# U.O.}   & \makecell{ F.R.}   \\ 
\midrule
$(10\times 10)_{\mathbb{E}}$           & 10.53                          & 0.2 & 3                & 15\%                                        \\[1.5pt]
$(20\times 20)_{\mathbb{E}}$         & 19.98                          &  0.3 & 9               & 37\%                                         \\[1.5pt]
\midrule
$(10\times 10)_{\text{CVaR}_{0.7}}$            & $\ge$11.02                          &  2.9 & 3            &9\%                                            \\[1.5pt]
$(20\times 20)_{\text{CVaR}_{0.7}}$         & $\ge$20.19                          & 7.5   & 9          &22\%                                           \\[1.5pt]
$(10\times 10)_{\text{CVaR}_{0.2}}$            & $\ge$16.53                          &  3.1 & 3            &4\%                                            \\[1.5pt]
$(20\times 20)_{\text{CVaR}_{0.2}}$         & $\ge$24.92                          & 7.6   & 9          &16\%                                           \\[1.5pt]
\midrule
$(10\times 10)_{\text{EVaR}_{0.7}}$            & $\ge$15.02                         & 3.3  & 3          &5\%                                            \\[1.5pt]
$(20\times 20)_{\text{EVaR}_{0.7}}$            & $\ge$23.42                         & 9.9      & 9           &11\%                                       \\[1.5pt]
$(10\times 10)_{\text{EVaR}_{0.2}}$            & $\ge$19.62                         & 3.9  & 3          &2\%                                            \\[1.5pt]
$(20\times 20)_{\text{EVaR}_{0.2}}$            & $\ge$29.36                         & 9.7      & 9           &6\%                                       \\[1.5pt]
\midrule
\end{tabular}
\caption{Comparison between total expectation, CVaR, and EVaR coherent risk measures. $(M \times N)_{\rho_t}$ denotes experiments with grid-world of size $M \times N$ and one-step coherent risk measure $\rho_t$. $J_\gamma(\iota_{init})$ is the valued of the constrained risk-averse POMDP problem (Problem 1). AIT denotes the average time spent for each iteration of Algorithm 1. $\#$ U.O. denotes the number of single grid uncertain obstacles used for robustness test.  F.R. denotes the failure rate out of 100 Monte Carlo simulations with the computed policy. }
\end{table}

For total expectation coherent risk measure, the calculations took significantly less time, since they are the result of solving a set of linear programs. For CVaR and EVaR, a set of DCPs were solved in the Risk Value Function Computation step. In the I-State Improvement step, a set of linear programs were solved for CVaR and convex optimizations for EVaR. Hence, EVaR calculation was the most computationally involved in this case. 
Note that these calculations can be carried out offline for policy synthesis and then the policy can be applied for risk-averse robot path planning.

The table also outlines the failure ratios of each risk measure. In this case, EVaR outperformed both CVaR and total expectation in terms of robustness, tallying with the fact that EVaR is conservative. In addition, these results suggest that, although discounted total expectation is a measure of performance in high number of Monte Carlo simulations, it may not be practical to use it for real-world planning under uncertainty scenarios. CVaR and especially EVaR seem to be a more reliable metric for performance in planning under uncertainty. 

  \begin{figure}[!t] \label{fig:mdp}\centering{
\includegraphics[scale=.4]{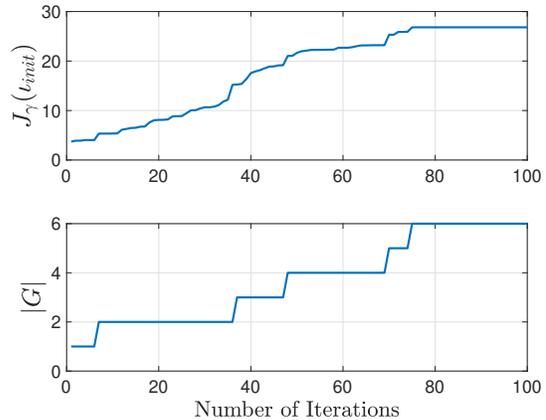}~
\caption{The evolution of the lower-bound and the number of i-states with respect to the number of iterations of Algorithm 1 for the $20\times20$ gridworld and EVaR coherent risk measure.}
} 
 \end{figure}
 
   \begin{figure*}[!h] \label{fig:pomdp}\centering{
\includegraphics[scale=.26]{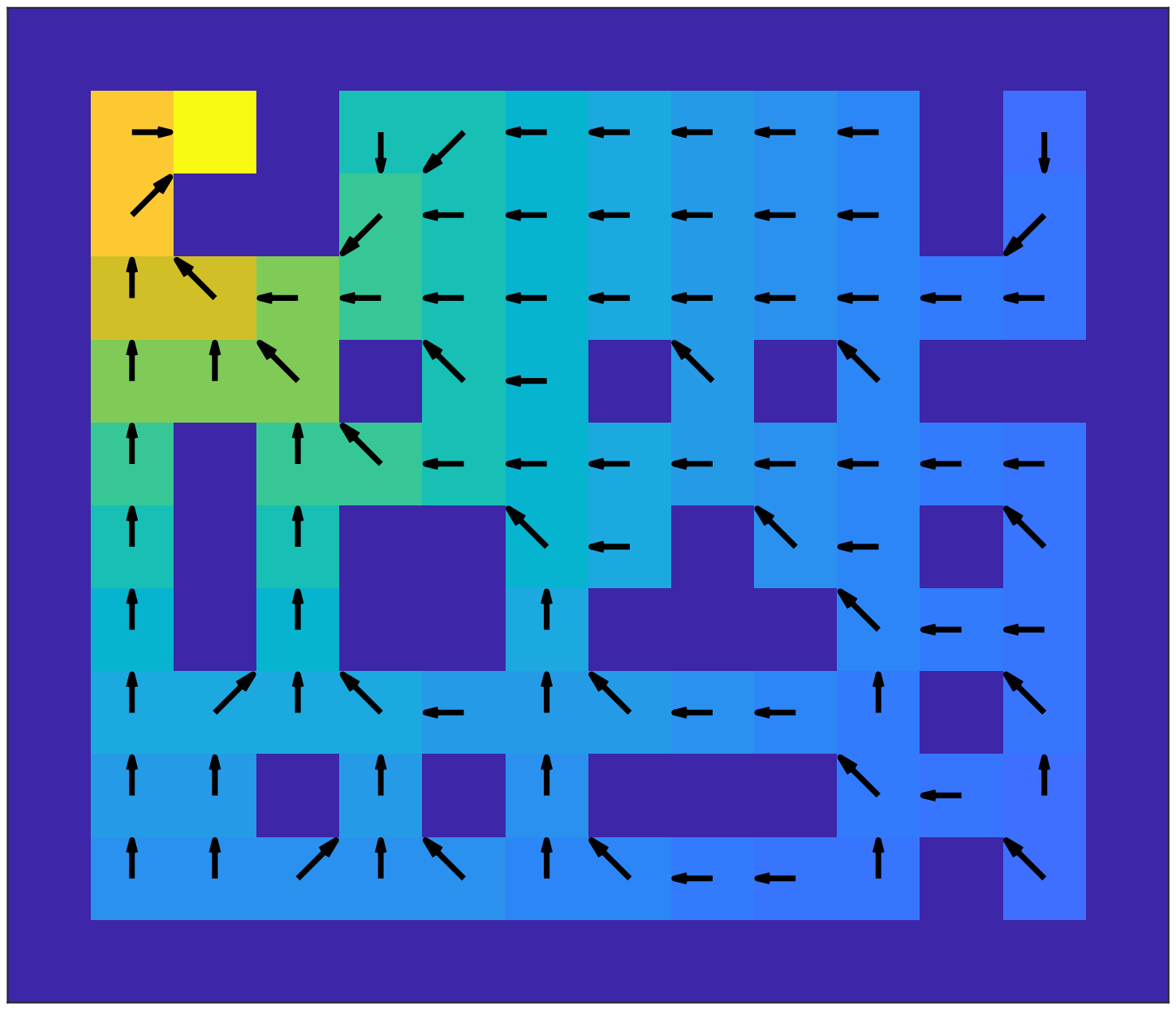}~
\hspace{0cm}
\includegraphics[scale=.26]{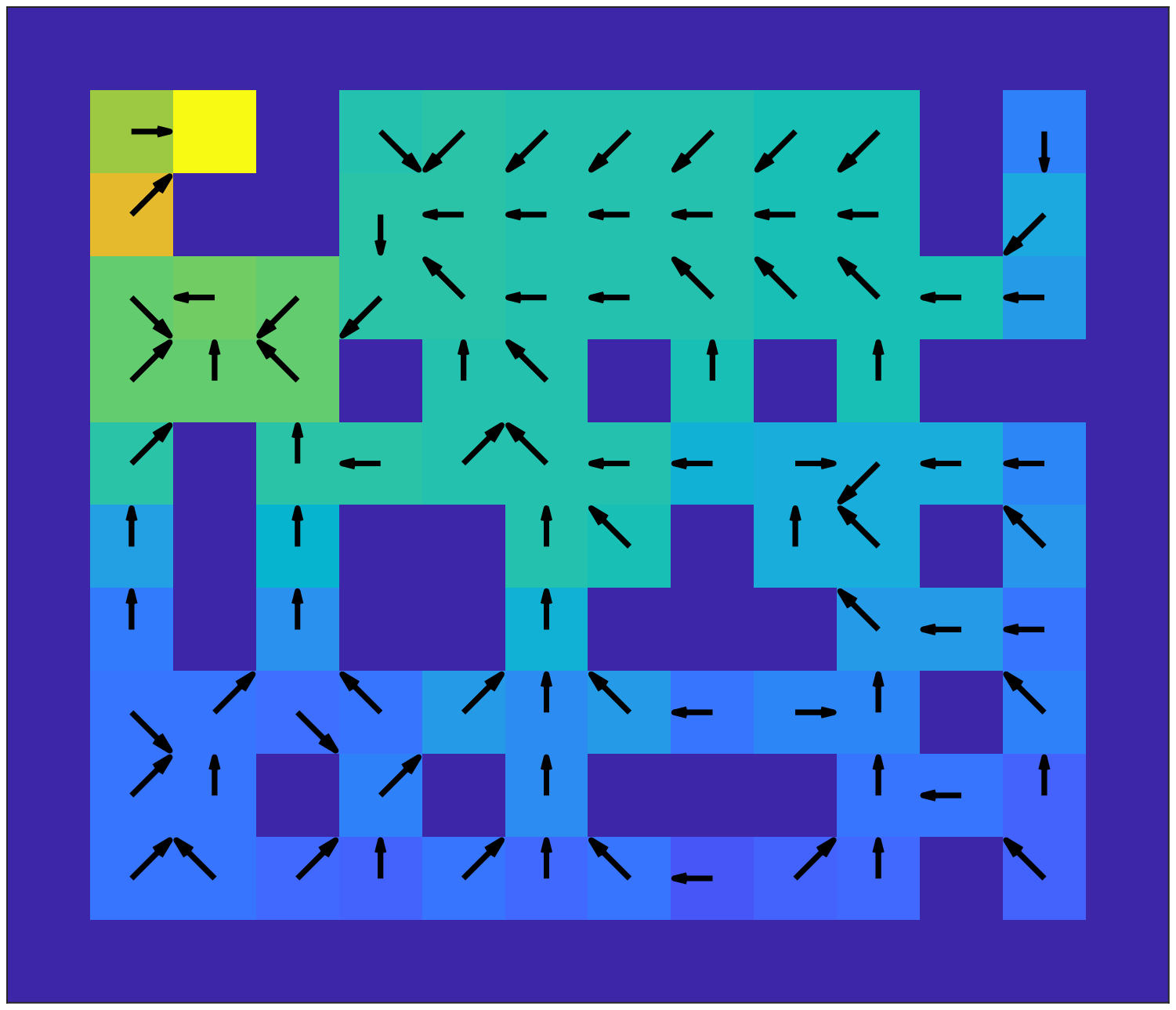}
\hspace{0cm}
\includegraphics[scale=.26]{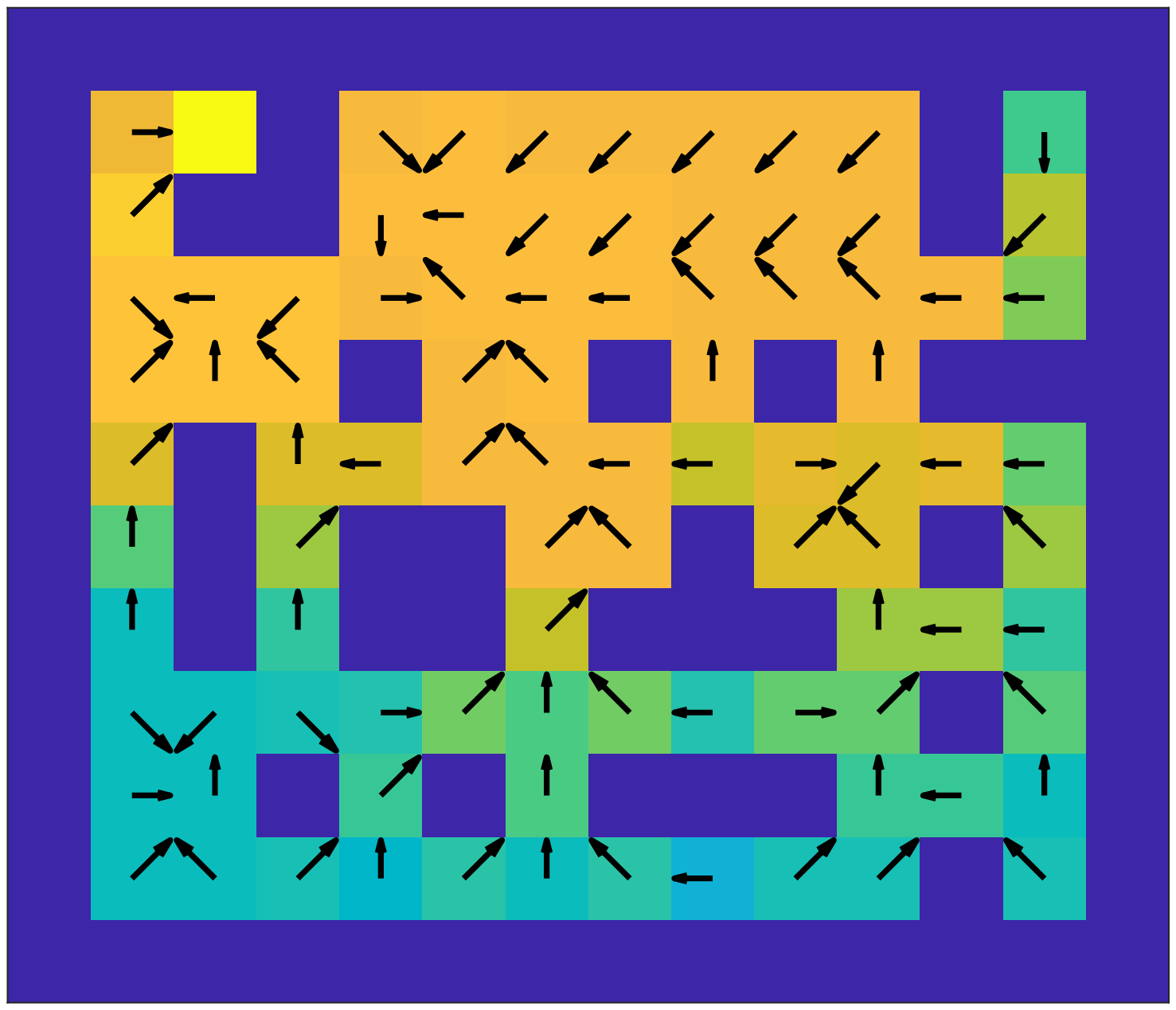}
\vspace{0cm}
\caption{Results for the POMDP example with total expectation (left), CVaR (middle), and EVaR (right) coherent risk measures. The goal is located at the yellow cell. Notice the 9 single cell obstacles used for robustness test.}
} 
 \end{figure*}

For the sake of illustrating the computed policies, Figure~3 depicts the results obtained from solving DCP~\eqref{eq:valueiteration} for a $20\times 20$ grid-world. The arrows on grids depict the (sub)optimal actions and the heat map indicates the values of Problem 1 for each grid state.  Note that the values for EVaR are greater than those for CVaR and the values for CVaR are greater from those of total expectation. This is in accordance with the theory that $\mathbb{E}(c)\le \mathrm{CVaR}_\varepsilon(c) \le \mathrm{EVaR}_\varepsilon(c)$~\cite{ahmadi2012entropic}. 

Moreover, for the $20\times20$ gridworld with EVaR coherent risk measure, Figure~2 depicts the evolution of the number of FSC I-states $|G|$ and the lower bound on the optimal value of Problem 1, $J_{\gamma}(\iota_{init})$, with respect to the iteration number of Algorithm 1. We can see that as the number of I-states increase, the lower bound is improved.



  \section{Conclusions and Future Research}
  
  We proposed an optimization-based method for designing policies for MDPs and POMDPs with coherent risk measure objectives and constraints. We showed that such value function optimizations are in the form of DCPs. In the case of POMDPs, we proposed a policy iteration method for finding sub-optimal FSCs that lower-bound the constrained risk-averse problem and we demonstrated that dependent on the coherent risk measure of interest the policy search can be carried out via a linear program or a convex optimization. Numerical experiments were provided to show the efficacy of our approach. In particular, we showed that considering coherent risk measures lead to significantly lower collision rates in Monte Carlo simulations in navigation problems. 

In this work, we focused on discounted infinite horizon risk-averse problems. Future work will explore other cost criteria~\cite{carpin2016risk}. The interested reader is referred to our preliminary results on total cost risk-averse MDPs~\cite{ahmadi2021risk}, where in Bellman's equations for the risk-averse stochastic shortest path problem are derived. Expanding on the latter work, we will also explore high-level mission specifications in terms of temporal logic formulas for risk-averse MDPs and POMDPs~\cite{ahmadi2020stochasticltl,rosolia2021time}. Another area for more research is concerned with receding-horizon motion planning under uncertainty with coherent risk constraints~\cite{dixit2020risk,hakobyan2019risk}, with particular application in robot exploration in unstructured subterranean environments~\cite{fan2021step} (also see works on receding horizon path planning where the coherent risk measure is in the total cost~\cite{sopasakis2019risk,singh2018framework} rather than the collision avoidance constraint).

\section*{Acknowledgment}

M. Ahmadi acknowledges  stimulating discussions with Dr. Masahiro Ono at NASA Jet Propulsion Laboratory and Prof. Marco Pavone at Nvidia Research-Stanford University.

\footnotesize{
\bibliography{references}
}
 \bibliographystyle{plain}

\end{document}